
\documentclass{article}

\usepackage{microtype}
\usepackage{graphicx}
\usepackage{subfigure}
\usepackage{booktabs} 




\usepackage[accepted]{icml2024}

\usepackage[pagebackref=true,breaklinks=true,colorlinks,urlcolor={red!90!black},linkcolor={red!90!black},citecolor={blue!90!black},bookmarks=false]{hyperref}

\usepackage{amsmath}
\usepackage{amssymb}
\usepackage{mathtools}
\usepackage{amsthm}

\usepackage[capitalize,noabbrev]{cleveref}

\theoremstyle{plain}
\newtheorem{theorem}{Theorem}[section]
\newtheorem{proposition}[theorem]{Proposition}
\newtheorem{lemma}[theorem]{Lemma}

\theoremstyle{definition}

\theoremstyle{remark}

\usepackage[textsize=tiny]{todonotes}


\usepackage{amsmath,amsfonts,bm}









\def\eqref#1{equation~\ref{#1}}









\def\1{\bm{1}}


\def\rd{{\textnormal{d}}}

\def\rh{{\textnormal{h}}}



\def\rvh{{\mathbf{h}}}

\def\rvr{{\mathbf{r}}}

\def\rvv{{\mathbf{v}}}

\def\rvx{{\mathbf{x}}}



\def\rmM{{\mathbf{M}}}

\def\rmR{{\mathbf{R}}}

\def\rmZ{{\mathbf{Z}}}


\def\ermT{{\textnormal{T}}}

\def\vzero{{\bm{0}}}

\def\vmu{{\bm{\mu}}}



\DeclareMathAlphabet{\mathsfit}{\encodingdefault}{\sfdefault}{m}{sl}
\SetMathAlphabet{\mathsfit}{bold}{\encodingdefault}{\sfdefault}{bx}{n}


\def\gF{{\mathcal{F}}}
\def\gG{{\mathcal{G}}}

\def\gK{{\mathcal{K}}}
\def\gL{{\mathcal{L}}}

\def\gT{{\mathcal{T}}}
\def\gU{{\mathcal{U}}}


\def\sC{{\mathbb{C}}}


\def\sR{{\mathbb{R}}}








\newcommand{\E}{\mathbb{E}}

\newcommand{\R}{\mathbb{R}}



\usepackage{multirow}
\usepackage{wrapfig}
\usepackage{xspace}

\newcommand{\ie}{\textit{i.e.}}
\newcommand{\eg}{\textit{e.g.}}

\newcommand{\method}{\textsc{EGNO}\xspace}

\icmltitlerunning{Equivariant Graph Neural Operator for Modeling 3D Dynamics}

\begin{document}

\twocolumn[
\icmltitle{Equivariant Graph Neural Operator for Modeling 3D Dynamics}



\icmlsetsymbol{equal}{*}

\begin{icmlauthorlist}
\icmlauthor{Minkai Xu}{stanford,equal}
\icmlauthor{Jiaqi Han}{stanford,equal}
\icmlauthor{Aaron Lou}{stanford}
\icmlauthor{Jean Kossaifi}{nvidia}
\icmlauthor{Arvind Ramanathan}{argonne}\\
\icmlauthor{Kamyar Azizzadenesheli}{nvidia}
\icmlauthor{Jure Leskovec}{stanford}
\icmlauthor{Stefano Ermon}{stanford}
\icmlauthor{Anima Anandkumar}{caltech}
\end{icmlauthorlist}

\icmlaffiliation{stanford}{Stanford University}
\icmlaffiliation{nvidia}{NVIDIA}
\icmlaffiliation{argonne}{Argonne National Laboratory}
\icmlaffiliation{caltech}{California Institute of Technology}

\icmlcorrespondingauthor{Minkai Xu}{minkai@cs.stanford.edu}

\icmlkeywords{Machine Learning, ICML}

\vskip 0.3in
]



\printAffiliationsAndNotice{\icmlEqualContribution. Work is partially done during Minkai's internship at NVIDIA.} 

\begin{abstract}
\looseness=-1
Modeling the complex three-dimensional (3D) dynamics of relational systems is an important problem in the natural sciences, with applications ranging from molecular simulations to particle mechanics. Machine learning methods have achieved good success by learning graph neural networks to model spatial interactions. However, these approaches do not faithfully capture temporal correlations since they only model next-step predictions. In this work, we propose Equivariant Graph Neural Operator (\method), a novel and principled method that directly models dynamics as trajectories instead of just next-step prediction. Different from existing methods, \method explicitly learns the temporal evolution of 3D dynamics where we formulate the dynamics as a function over time and learn neural operators to approximate it. To capture the temporal correlations while keeping the intrinsic SE(3)-equivariance, we develop equivariant temporal convolutions parameterized in the Fourier space and build \method by stacking the Fourier layers over equivariant networks. \method is the first operator learning framework that is capable of modeling solution dynamics functions over time while retaining 3D equivariance. Comprehensive experiments in multiple domains, including particle simulations, human motion capture, and molecular dynamics, demonstrate the significantly superior performance of \method against existing methods, thanks to the equivariant temporal modeling. 
Our code is available at \url{https://github.com/MinkaiXu/egno}.
\end{abstract}

\section{Introduction}

Modeling the dynamics of many body systems in Euclidean space is an important problem in machine learning (ML) at all scales~\citep{tenenbaum2011grow,battaglia2016interaction}, ranging from interactions of atoms in a molecule or protein to interactions of celestial bodies in the universe. Traditional methods tackle these physical dynamics by Newton’s laws along with interactions calculated from physical rules, \eg, Coulomb force for modeling systems with charged particles~\citep{kipf2018neural}. Such calculations based on physical rules typically are very computationally expensive and slow, and recently researchers are studying alternative ML approaches by learning neural networks (NNs) to automatically capture the physical rules and model the interactions in a data-driven fashion. Along this research direction, considerable progress has been achieved in recent years by regarding objects as nodes and interactions
as edges and learning the spatial interactions via graphs neural networks (GNNs)~\citep{battaglia2016interaction,kipf2018neural,sanchez2019hamiltonian,martinkus2020scalable,sanchez2020learning,pfaff2020learning}.


Among the advancements, Equivariant GNN (EGNN) is a state-of-the-art approach with impressive results in modeling physical dynamics in 3D Euclidean space~\citep{thomas2018tensor,satorras2021en}. EGNNs possess equivariance to roto-translational transformations in the Euclidean space, which has been demonstrated as a vital inductive bias to improve  generalization~\citep{kohler2019equivariant,fuchs2020se,han2022equivariant,xu2022geodiff}. 
However, these methods typically model the dynamics by only learning to predict the next state given the current state, failing to faithfully capture the temporal correlation along the trajectory~\citep{song2023consistency,song2024improved}. As a result, performance is still often unsatisfactory without fully understanding the dynamics as systems evolve through time.



\begin{figure*}[!t]
    \centering
    \includegraphics[width=0.9\linewidth]{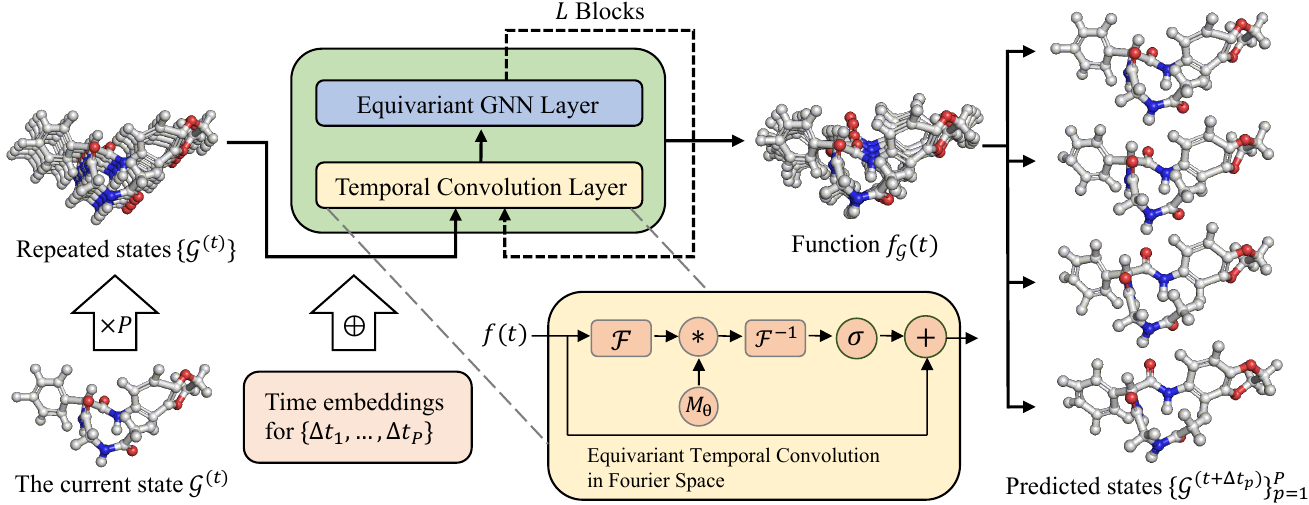} 
    \caption{Illustration of \method. \method blocks (green) can be built with any EGNN layers (blue) and the proposed equivariant temporal convolution layers (yellow). Consider discretizing the time window $\Delta T$ into $P$ points $\{\Delta t_1, \dots, \Delta t_P\}$. Given a current state $\gG^{(t)}$, we will first repeat its features by $P$ times, concatenate the repeated features with time embeddings, and feed them into $L$ \method blocks.  Within each block, the temporal layers operate on temporal and channel dimensions while the EGNN layers operate on node and channel dimensions. Finally, \method can predict future dynamics as a function $f_\gG(t)$ and decode a trajectory of states $\{\gG^{(t+\Delta t_p)}\}_{p=1}^P$ in parallel.}
    \label{fig:framework}
\end{figure*}

{\bf Our contributions:} In this paper, we propose Equivariant Graph Neural Operator (\method), a novel and principled method to overcome the above challenge by directly modeling the entire trajectory dynamics instead of just the next time-step prediction. Different from existing approaches, \method predicts dynamics as a temporal function that is not limited to a fixed discretization.
Our framework is inspired by the neural operator (NO)~\citep{li2020neural,kovachki2021neural,kovachki2021universal}, specifically, Fourier neural operator (FNO)~\citep{li2021fourier,zheng2023fast}, which has shown great effectiveness in learning maps between function spaces with desirable discretization convergence guarantees. 
Our core idea is to formulate the physical dynamics as a function over time and learn neural operators to approximate it.
The main challenge in developing \method is to capture the temporal correlations while still keeping the SE(3)-equivariance in the Euclidean space. 
To this end, we develop equivariant temporal convolution layers in the Fourier space and realize \method by stacking them with equivariant networks, as shown in \Cref{fig:framework}. Our key innovation is that we notice the equivariance property of Fourier and inverse Fourier transforms and keep this equivariance in Fourier space with special kernel integral operators.
The resulting \method architecture is the first efficient operator learning framework that is capable of mapping a current state directly to the solution trajectories, while retaining 3D spatial equivariance. 


\method enjoys several unique advantages compared to existing GNN-based methods, thanks to the proposed equivariant operator learning framework. Firstly, our method explicitly learns to model the trajectory while still keeping the intrinsic symmetries in Euclidean space. This, in practice, leads to more expressive modeling of underlying dynamics and achieves higher state prediction accuracy. Secondly, our operator formulation enables efficient parallel decoding of future states (within a time window) with just one model inference, and the model is not limited to one fixed temporal discretization. This allows users to run dynamics inference at any timestep size without switching model parameters. Finally, our proposed temporal convolutional layer is general and can be easily combined with any specially designed EGNN layers. This permits \method to be easily deployed in a wide range of different physical dynamics scenarios.

We conduct comprehensive experiments on multiple benchmarks to verify the effectiveness of the proposed method, including particle simulations, motion captures, and molecular dynamics for both small molecules and large proteins. Experimental results show that \method can consistently achieve superior performance over existing methods by a significant margin on various datasets. For instance, \method incurs a relative improvement of $36\%$ over EGNN for Aspirin molecular dynamics, and $52\%$ on average for human motion capture. All the empirical studies suggest that \method enjoys a higher capacity to model the geometric dynamics, thanks to the equivariant temporal modeling.

\section{Related Work}

\subsection{Graph Neural Networks}

Graph neural networks~\citep{gilmer2017neural,kipf2017semisupervised} are a family of neural network architectures for representation learning on relational structures, which has been widely adopted in modeling complex interactions and simulating physical dynamics. Early research in this field improves performance by designing expressive modules to capture the system interactions~\citep{battaglia2016interaction,kipf2018neural,mrowca2018flexible} or imposing physical mechanics~\citep{sanchez2019hamiltonian}. Beyond interactions, another crucial consideration for simulating physical dynamics is the symmetry in Euclidean space, \ie, the equivalence \textit{w.r.t} rotations and translations. To this end, several works are first proposed to enforce translation equivariance~\citep{ummenhofer2019lagrangian,sanchez2020learning, pfaff2020learning,wu2018pointconv}, and then later works further enforce the rotation equivariance by spherical harmonics~\citep{thomas2018tensor,fuchs2020se} or equivariant message passings~\citep{satorras2021en,huang2022equivariant,han2022learning}.
In addition, some methods~\citep{liu2022spherical,du2022se3} also introduce architectural enhancements with local frame construction to process geometric features while satisfying equivariance.
However, despite the considerable progress, all the models are limited to modeling only the spatial interactions and ignore the critical temporal correlations, while in this paper, we view dynamics as a function of geometric states over time and explicitly learn the temporal correlations.

\subsection{Neural Operators}

Neural operators~\citep{kovachki2021neural} are machine learning paradigms for learning mappings between Banach spaces, \ie, continuous function spaces~\citep{li2020neural,kovachki2021universal}, which have been widely adopted as data-driven surrogates for solving partial differential equations (PDE) or ordinary differential equations (ODE) in scientific problems.
Among them, Fourier neural operator~\citep{li2021fourier} is one of the state-of-the-art methods for solving PDEs across many scientific problems~\citep{yang2021seismic,wen2022accelerating}. It holds several vital properties including discretization invariance and approximation universality~\citep{kovachki2021universal,kovachki2021neural}, which is useful in our scenario where we aim to model the dynamics as a function of 3D states over time. However, previous studies mainly concentrate on functions defined on spatial domains~\citep{yang2021seismic,wen2022accelerating} or temporal functions with simple scalar outputs~\citep{zheng2023fast}. By contrast, in this paper, we are interested in modeling geometric dynamics with temporal functions describing the evolution of directional features where it is nontrivial to enforce the symmetry of Euclidean space.

\section{Preliminaries}

\subsection{Notions and Notations}
\label{subsec:prelim-notation}

In this paper, we consider modeling the dynamics of multi-body systems, \ie, a sequence of geometric graphs $\gG^{(t)}$ indexed by time $t$. It can also be viewed as a function of 3D states over time $f_\gG: t \rightarrow \gG^{(t)}$.
Suppose we have $N$ particles or nodes in the system, then the graph $\gG$ at each timestep can be represented as the feature map $[\rvh, \rmZ]$, where $\rvh = (\rh_1, \dots, \rh_N) \in \sR^{N \times k}$ is the node feature matrix of $k$ dimension (\eg, atom types and charges), and $\rmZ \in \sR^{N \times m \times 3}$ is the generalized matrix composed of $m$ directional tensors in 3D. For example, 
$\rmZ$ can be instantiated as the concatenation of coordinate matrix $\rvx \in \sR^{N\times 3}$ and velocity matrix $\rvv \in \sR^{N\times 3}$, leading to $\rmZ = [\rvx, \rvv] \in \sR^{N \times 2 \times 3}$. Specifically, in this paper, we consider the dynamics where $\rvh$ will stay unaffected \emph{w.r.t.} time while $\rmZ$ is updated, \eg, for molecular dynamics, atom types will remain unchanged while positions are repeatedly updated. 

\subsection{Equivariance}

\textit{Equivariance} is important and ubiquitous in 3D systems~\citep{thomas2018tensor,fuchs2020se}. Let $g\in G$ denote an abstract group, then a function $f: X \rightarrow Y$ is defined as equivariant \textit{w.r.t} $g$ if there exists equivalent transformations $T_g$ and $S_g$ for $g$ such that $f \circ T_g (\rmZ) = S_g \circ f (\rmZ)$~\citep{serre1977linear}. In this paper, we consider the Special Euclidean group SE(3) for modeling the 3D dynamics, the group of 3D rotations and translations. Then transformations $T_g$ and $S_g$ can be represented by a translation $\vmu \in \sR^3$ and an orthogonal rotation matrix $\rmR \in \text{SO}(3)$.

As a practical example, to model dynamics by learning a function to predict $\gG^{(t+1)}$ from $\gG^{(t)}$, we hope that any rotations and translations applied to the current state coordinate $\rvx^{(t)}$ should be accordingly applied on the predicted next one $\rvx^{(t+1)}$, while the velocities $\rvv$ should also rotate in the same way but remain unaffected by translations\footnote{Following this convention, in this paper we use $\rmR \rmZ + \vmu$ as the shorthand for $[\rmR \rvx + \vmu, \rmR \rvv]^\ermT$.}.  Such inductive bias has been shown critical in improving data efficiency and generalization for geometric modeling~\citep{s.2018spherical,xu2023geometric}.

\subsection{Fourier Neural Operator}
\label{subsec:prelim-fno}

Fourier neural operator (FNO)~\citep{li2021fourier} is an advanced data-driven approach for learning mappings between Banach spaces, which has been widely deployed in solving PDEs and ODEs~\citep{kovachki2021neural,kovachki2021universal,zheng2023fast} in scientific problems. Typically, a Fourier neural operator $F$ can be implemented as a stack of $L$ kernel integration layers with each kernel function parameterized by learnable weights $\theta$. Let $\sigma$ denote nonlinear activation functions, we have
\begin{equation}
\label{eq:prelim-fno}
    F_\theta := Q \circ \sigma(W_L + \gK_L) \circ \cdots \circ \sigma(W_1 + \gK_1) \circ P
\end{equation}
where $P$ and $Q$ are lifting and projection operators parameterized by neural networks, which lift inputs to higher channel space and finally project them back to the target domain, respectively. $W_i$ are point-wise linear transformations, and $\gK_i$ are kernel integral operators parameterized in Fourier space. Formally, given $f_i$ as the input function to the $i$-th layer, $\gK_i$ is defined as:
\begin{equation}
\label{eq:prelim-kernel}
    (\gK_i f)(t) = \gF^{-1} (\rmM_i \cdot (\gF f)) (t), \quad \forall t \in D,
\end{equation}
where $\gF$ and $\gF^{-1}$ denote the Fourier and inverse Fourier transforms on temporal domain $D$, respectively, and $\rmM_i$ is the learnable parameters that parameterizes $\gK_i$ in Fourier space. Starting from input function $a(t)$, we first lift it to higher dimension by $P$, apply $L$ layers of integral operators and activation functions, and then project back to the target domain with $Q$. The resulting FNO is shown to follow the critical discretization invariance of the function domain $D$.

\section{Equivariant Graph Neural Operator}



\subsection{Overview}
\label{subsec:method-overview}

Our goal is to learn a neural operator that given the current structure $\gG^{(t)}$ (recurrently) predicts the future structures $\gG^{(t+\Delta t)}$ with $\Delta t > 0$. Previous methods~\citep{thomas2018tensor,gilmer2017neural,wang2023spatial} mainly tackled the problem by learning to predict $\gG^{(t+1)}$ from $\gG^{(t)}$, while in this paper we instead learn to predict a future trajectory $\{\gG^{(t+\Delta t)}: \Delta t \sim D= [0, \Delta T] \}$ within a time window $\Delta T$. Let $\gU: D \rightarrow \sR^{N\times m \times 3}$ denote the space of the target temporal functions that predict future states $\rmZ^{(t+\Delta t)}$. Assume $F^\dagger$ to be the solution operator, we can define \method as $F_\theta$ and learn it with the following objective function:
\begin{equation}
\label{eq:method-loss}
    \min_\theta \E_{\gG^{(t)}\sim p_{\text{data}}} \gL (F_\theta(\gG^{(t)})(t) -F^\dagger (\gG^{(t)})(t)),
\end{equation}
where $\gL: \gU \rightarrow \sR_+$ is any loss function such as $L_p$-norm. Further, the dynamics introduced in \cref{subsec:prelim-notation} can generally be described by Newtonian equations of motion
\begin{equation}
\label{eq:method-dynamics-ode}
    \rd \rmZ =
    \left[ \begin{array}{c}
    \rd \rvx \\
    \rd \rvv
    \end{array} 
    \right ] = 
    \left[ \begin{array}{c}
    \rvv \cdot \rd t \\
    -\rvr \cdot \rd t - \gamma \rvv \cdot \rd t
    \end{array} 
    \right ],
\end{equation}
where $\rvr$ represents the force of particle interactions, and $\gamma$ is the weight controlling the potential friction term. From \cref{eq:method-dynamics-ode}, we know there exists an exact solution of $\rmZ^{(t)}$ corresponding to the solution operator $F^\dagger$ in \cref{eq:method-loss}, which represents the solver for solving the dynamics function, \ie, mapping a current structure $\gG^{(t)}$ to the function $f_\gG(\Delta t)$ that describes future states $\gG^{(t + \Delta t)}$ for $\Delta t \sim [0, \Delta T]$. According to neural operator theory, we have that the FNO framework can universally approximate the dynamics and predict future geometric structures $\gG^{(t + \Delta t)}$ with one forward inference. We leave the formal statement on universality in \Cref{app:subsec:proof-universal}.

\subsection{Equivariant Temporal Convolution Layer in Fourier Space}
\label{subsec:method-timeconv}

In this part, we formally introduce our temporal convolution layers $\gT_\theta$, which are built upon Fourier integration operators $\gK_\theta$ to model the temporal correlations. Let $f: D \rightarrow \gG$\footnote{For simplicity, in \Cref{subsec:method-overview} we omit the subscript and use $f$ to denote $f_\gG$.} be the input starting function that describes structures $\gG^{(t)}$ for time $t$, \ie, $f(t) = [f_\rvh, f_\rmZ (t)]^\ermT$. Then the convolution layer $\gT_\theta$ is implemented as:
\begin{equation}
\label{eq:method-temp-layer}
    (\gT_\theta f) (t) = f(t) + \sigma((\gK_\theta f)(t)),
\end{equation}
which is a modified version of the FNO layer in \cref{eq:prelim-fno}. In particular, we move the nonlinearity $\sigma$ and trainable parameters to the integration layer $\gK$ and leave the linear transform $W$ as an identical residual connection~\citep{he2016deep}, which
has shown effectiveness in various applications including operator learning~\citep{li2021fourier,zheng2023fast}. 
The temporal integration layer $\gK_\theta$ is implemented directly in the frequency domain, following \cref{eq:prelim-kernel}, where we first conduct Fourier transform over $f$ and then multiply it with $\rmM_\theta$. In the following, we introduce how we design $\gK_\theta$ to ensure efficient temporal modeling while keeping the crucial SE(3)-equivariance.

\textbf{Equivariance.} Different from typical FNO operating on non-directional function space, in this work we aim to learn functions $f(t)$ for predicting geometric structures $\gG^{(t)}$, where the output is directional in 3D and requires the vital SE(3)-equivariance inductive bias. To be more specific, the convolution layer $\gT_\theta$ integrates a sequence of geometric structures as inputs and outputs a new sequence of updated structures, and the equivariance requirement means that the output ones should transform accordingly if a global roto-translation transformation is applied on the input ones. Let $\rmR$ and $\vmu$ denote the rotation matrix and translation vector respectively, then we have
\begin{equation}
\label{eq:method-equivariance}
    (\gT_\theta (\rmR f + \vmu)) (t) = (\rmR(\gT_\theta f) + \vmu) (t),
\end{equation}
where $\rmR f + \vmu$ is the shorthand for $[f_\rvh, \rmR f_\rmZ + f_\vmu]$. This means with roto-translational transformations, the directional features will be affected while non-directional node features stay unaffected. 

However, it is non-trivial to impose such equivariance into FNOs. Our key innovation is to implement $\gK_\theta$ with a block diagonal matrix $\rmM_\theta$ in \cref{eq:prelim-kernel}. Formally, we have
\begin{equation}
\label{eq:method-temporal-conv}
    (\gK_\theta f) (t) = \gF^{-1} (
    \left[ \begin{matrix}
        \rmM^{\rvh}_\theta & \vzero \\
        \vzero & \rmM^{\rmZ}_\theta
    \end{matrix} \right]
    \cdot (\gF 
    \left[ \begin{matrix}
        f_\rvh \\
        f_\rmZ
    \end{matrix} \right]
    )) (t),
\end{equation}
where $\rmM^\rvh_\theta \in \sC^{I \times k \times k}$ and $\rmM^\rmZ_\theta \in \sC^{I \times m \times m}$ are two complex-valued matrices. The $\gF$ and $\gF^{-1}$ are realized with fast Fourier transform with $I$ being the hyperparameter to control the maximal number of frequency modes. Then for the frequency domain, we will truncate the modes higher than $I$ and thus have $\gF(f_\rvh) \in \sC^{I \times k}$ and $\gF(f_\rmZ) \in \sC^{I \times m \times 3}$ respectively\footnote{Note that, $k$ and $m$ can be viewed as numbers of channels, and the temporal convolutions only operate on the temporal and feature channel dimensions. Therefore, here the node dimension $N$ is just treated as the same as the batch dimension and omitted in the expressions.}. The product of Fourier transform results and kernel functions are given by:
\begin{equation}
\begin{aligned}
\label{eq:method-temporal-conv-fft}
    [\rmM^{\rvh}_\theta \cdot (\gF f_\rvh)]_{i,j} &= \sum_{l=1}^k (\rmM^{\rvh}_\theta)_{i,j,l} (\gF f_\rvh)_{i,l}, \\
    [\rmM^{\rmZ}_\theta \cdot (\gF f_\rmZ)]_{i,s} &= \sum_{l=1}^m (\rmM^{\rmZ}_\theta)_{i,s,l} (\gF f_\rmZ)_{i,l},
\end{aligned}
\end{equation}
for all $i \in \{1, \dots, I \}$, $j \in \{ 1, \dots, k \}$ and $s \in \{ 1, \cdots, m\}$. The key difference between the two products is that the first is simply scalar multiplications, while the second multiplies scalar with 3D vectors. Furthermore, to ensure equivariance, we implement the nonlinearity in \cref{eq:method-temp-layer} by 
\begin{equation}
\label{eq:method-nonlinear}
    \sigma(\gK_\theta f)=[\sigma(\gK_\theta f_\rvh), \hat{\sigma}(\gK_\theta f_\rmZ)]^\ermT, 
\end{equation}
where $\hat{\sigma}$ denotes equivariant nonlinear activation layers for $\rmZ$ part. It can be realized as $\hat{\sigma}(\gK_\theta f_\rmZ) = \sigma(||\gK_\theta f_\rmZ||) \cdot \gK_\theta f_\rmZ $, which is a scalar transform of the directional feature and thus keep its original orientations~\citep{thomas2018tensor}. We in practice did not observe significant performance change by introducing nonlinear onto the $\rmZ$. With the above designs, we enjoy the following crucial property:
\begin{theorem}
\label{theorem:equivariance}
    By parameterizing the kernel function $\gK_\theta$ with \cref{eq:method-temporal-conv,eq:method-nonlinear}, we have that $\gT_\theta$ is an SO(3)-equivariant operator, \ie, $(\gT_\theta (\rmR f)) (t) = (\rmR(\gT_\theta f)) (t)$.
\end{theorem}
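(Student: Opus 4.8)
The plan is to show that every building block of $\gT_\theta$ commutes with the fixed spatial rotation $\rmR$, and then to compose these commutation relations. Since $\rmR$ acts only on the directional block as $f_\rmZ \mapsto \rmR f_\rmZ$ while leaving the scalar features $f_\rvh$ untouched, the block-diagonal structure of $\rmM_\theta$ in \cref{eq:method-temporal-conv} is exactly what prevents the invariant scalar channels and the equivariant directional channels from mixing. I would emphasize this decoupling first, because it guarantees that it suffices to analyze the $\rvh$- and $\rmZ$-blocks separately, with $\rmR$ understood to act as the identity on the $\rvh$-component and as a rotation on the $\rmZ$-component.

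First I would verify that the Fourier and inverse Fourier transforms commute with $\rmR$. The key observation is that $\gF$ and $\gF^{-1}$ are linear transforms acting on the temporal variable $t$, whereas $\rmR$ is a constant-in-time linear map acting on the $3$-dimensional spatial index of $f_\rmZ$. Because $\rmR$ does not depend on $t$, it factors through the temporal transform, giving $\gF(\rmR f_\rmZ) = \rmR\,(\gF f_\rmZ)$ and likewise for $\gF^{-1}$. This is the crux of the argument and the step I expect to carry the conceptual weight, as it is precisely the equivariance of the Fourier transform that the whole construction relies on.

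Next I would check the kernel multiplication in \cref{eq:method-temporal-conv-fft}. Each directional output channel is an $\rmM^\rmZ_\theta$-weighted sum of the input vectors $(\gF f_\rmZ)_{i,l}$, where the weights $(\rmM^\rmZ_\theta)_{i,s,l}$ are scalars. Replacing $(\gF f_\rmZ)_{i,l}$ by $\rmR\,(\gF f_\rmZ)_{i,l}$ and pulling the constant matrix $\rmR$ out of the scalar-weighted sum shows that this contraction commutes with $\rmR$; the $\rvh$-block is a pure scalar contraction over invariant quantities and is therefore unaffected. Composing with the previous step yields $\gK_\theta(\rmR f) = \rmR\,(\gK_\theta f)$ under the blockwise convention above.

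Then I would handle the nonlinearity of \cref{eq:method-nonlinear}. On the scalar block, $\sigma$ acts on rotation-invariant quantities and is trivially compatible. On the directional block, $\hat{\sigma}(\rmZ) = \sigma(\|\rmZ\|)\cdot\rmZ$ commutes with $\rmR$ because rotations preserve the norm, so $\sigma(\|\rmR\rmZ\|) = \sigma(\|\rmZ\|)$ leaves the scalar gate invariant while the factor $\rmZ \mapsto \rmR\rmZ$ rotates. Finally, the residual connection in \cref{eq:method-temp-layer} adds $f(t)$ back blockwise, and since addition, the kernel $\gK_\theta$, and the nonlinearity all commute with $\rmR$, I would conclude $(\gT_\theta(\rmR f))(t) = \rmR\,f(t) + \rmR\,\sigma((\gK_\theta f)(t)) = (\rmR(\gT_\theta f))(t)$. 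The main obstacle is not any single calculation but the bookkeeping of how $\rmR$ acts as identity on $f_\rvh$ and as rotation on $f_\rmZ$ consistently through each step, which the block-diagonal design is precisely engineered to make coherent.
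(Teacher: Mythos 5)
Your proposal is correct and follows essentially the same route as the paper's proof: a blockwise argument that each component of $\gT_\theta$ (the Fourier transform, the scalar-valued kernel contraction, the nonlinearity, and the residual connection) commutes with a constant spatial rotation $\rmR$, with the Fourier equivariance established via linearity of $\gF$ exactly as in the paper's Lemma. The only cosmetic differences are that the paper derives $\gF^{-1}$-equivariance by composing with $\gF$ rather than repeating the linearity argument, and treats $\sigma$ as the identity on the $\rmZ$-block, whereas you verify the norm-gated $\hat{\sigma}$ directly --- a slightly more complete handling of \cref{eq:method-nonlinear}.
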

The key foundations behind the theorem are that $\gF$ and $\gF^{-1}$ are both equivariant operations, and we keep the equivariance in the frequency domain by updating 3D directional vectors with linear combinations. We provide the full proof for the theorem in \Cref{app:sec:proof}.

\textbf{Temporal discretization.} In practice,  we discretize the time domain and conduct discrete Fourier transforms for better computational efficiency. Assume the time window with length $\Delta T$ is discretized into $P$ points. Considering the input and output function domains of temporal convolution layer $\gT_\theta$ are both structures $\gG \in \sR^{N\times (k+m\times 3)}$, then $f(t)$ can be represented as a sequence of $P$ structure states, \ie, a tensor in $\sR^{P \times N\times (k+m\times 3)}$. Thus, we can efficiently perform $\gF$ and $\gF^{-1}$ on the trajectories as \cref{eq:method-temporal-conv} with fast Fourier transform algorithm.

\subsection{Generalized Architecture}
\label{subsec:method-arch}

As shown in \Cref{fig:framework}, the \method is very generalized and can be implemented by stacking the proposed temporal convolutions with any existing EGNN layers~\citep{thomas2018tensor,fuchs2020se,huang2022equivariant}. These equivariant layers ($\operatorname{EL}$) can be abstracted as a general class of networks that are equivariant to rotation and translation, \ie,
\begin{equation}
    \rvh^\text{out}, \rmR \cdot \rmZ^\text{out} +\vmu  = \operatorname{EL}_\theta (\rvh^\text{in}, \rmR \cdot \rmZ^\text{in} +\vmu ).
\end{equation}
In this paper, we choose the original EGNN~\citep{satorras2021en}, one of the most widely adopted equivariant graph networks, as the backbone. Assuming we discretize the time window $\Delta T$ into $\{\Delta t_1, \dots, \Delta t_P\}$ and take the current state $\gG^{(t)}$ as input, our workflow is to first repeat its feature map by $P$ times, expand the features with time embeddings, and then forward the features into \method blocks composed of temporal convolutions and EGNN layers. In particular, EGNN layers would only operate on the node and channel dimension within each graph and treat the temporal dimension the same as the batch dimension.
The overall framework is described in \Cref{fig:framework}.

A minor remaining part of the temporal convolution layer (\cref{eq:method-temporal-conv}) is that so far it is still only equivariant to rotations but not translations. Yet, translation equivariance is desirable for some geometric features like coordinates. Indeed, this part can be trivially realized by canceling the center of mass (CoM). In detail, for temporal convolution on coordinates $\rvx$, we will first move the structure to zero-CoM by subtracting $\bar{\rvx} = \frac{1}{N}\sum_{i=1}^N \rvx_i$, pass it through the layer $\gT_\theta$, and then add original CoM $\bar{\rvx}$ back. Such workflow handles the translation equivariance in a simple yet effective way.

\subsection{Training and Decoding}
\label{subsec:method-train&sample}

\textbf{Training.} The whole \method framework can be efficiently trained by minimizing the integral of errors over the decoded dynamics, \ie, $\min_\theta \E_{\gG^{(t)}\sim p_{\text{data}}} \int_{[0, \Delta T]} || F_\theta(\gG^{(t)})(\Delta t) - F^\dagger(\gG^{(t)})(\Delta t)|| \rd \Delta t$. In practice, we optimize the temporally discretized version:
\begin{equation}
    \min_\theta \E_{\gG^{(t)}\sim p_{\text{data}}} \frac{1}{P} \sum_{p=1}^P || F_\theta(\gG^{(t)})(\Delta t_p) - F^\dagger(\gG^{(t)})(\Delta t_p)||,
\end{equation}
where $\{\Delta t_1, \dots, \Delta t_p\}$ are discrete timesteps in the time window $[0, \Delta T]$. Specifically, in this paper, we concentrate on modeling the structural dynamics, where only directional features $\rmZ$ are updated and node features $\rvh$ stay unchanged. Therefore, the norm in the objective function will only be calculated over $\rmZ$ part of the predicted structures $\gG^{(t+\Delta t)}$.

\textbf{Decoding.} One advantage of \method is that all modules including temporal convolutions and equivariant networks can process data at different times in parallel. Therefore, from a current state $\gG^{(t)}$, \method can efficiently decoding all future structures $\gG^{(t+\Delta t_p)}$ at timesteps $\Delta t_p \in \{\Delta t_1, \dots, \Delta t_p\} $ with a single model call. Then a specific one can be chosen according to user preference for time scales, which in practice is a highly favorable property for studying dynamics at different scales~\citep{schreiner2023implicit}.

\newcommand{\stos}{\texttt{S2S}\xspace}
\newcommand{\stot}{\texttt{S2T}\xspace}
\newcommand{\ttot}{\texttt{T2T}\xspace}

\section{Experiment}

\begin{table*}[!t]
  \begin{minipage}[!t]{.3\textwidth}

  \centering
  \setlength{\tabcolsep}{3pt}
  \small
    \caption{MSE in the N-body simulation.}
    \resizebox{\textwidth}{!}{
  \begin{tabular}{lcc}
  \toprule
     & F-MSE   \\
    \midrule
    Linear~\citep{satorras2021en} & 0.0819  \\
    SE(3)-Tr.~\citep{fuchs2020se} & 0.0244  \\ 
    TFN~\citep{thomas2018tensor} & 0.0155   \\ 
    MPNN~\citep{gilmer2017neural} &  0.0107  \\ 
    RF~\citep{kohler2019equivariant} & 0.0104   \\
    ClofNet~\citep{du2022se3} & 0.0065 \\
    EGNN~\citep{satorras2021en} & 0.0071  \\ 
    EGNN-R &  0.0720   \\
    EGNN-S & 0.0070 \\
    \midrule
    \method & \textbf{0.0054} \\
    \midrule
     & A-MSE\\
    \midrule
    EGNN-R & 0.0215 \\
    EGNN-S & 0.0045   \\
    \midrule
    \method &  \textbf{0.0022} \\
    \bottomrule
    
  \end{tabular}
  }
  \label{tab:n_body}
\end{minipage}%
\hfill
\begin{minipage}[!t]{.35\textwidth}
\small
  \centering
  \caption{F-MSE in N-body simulation \emph{w.r.t.} different sizes of the training set.}
  \resizebox{\textwidth}{!}{
    \begin{tabular}{lrrr}
    \toprule
    $|$Train$|$  & 1000  & 3000  & 10000 \\
    \midrule
    EGNN  &   0.0094    &    0.0071   &  0.0051 \\
    \method-U $(P=5)$ &  0.0082     &  0.0056     & 0.0036 \\
    \method-L $(P=5)$ & 0.0071      &  0.0055     & 0.0038 \\
    \bottomrule
    \end{tabular}%
    }
  \label{tab:n_body_datasize}%
  \vspace{+10pt}
  \centering
  \caption{F-MSE in N-body simulation \emph{w.r.t.} different numbers of time steps $P$.}
    \begin{tabular}{lrr}
    \toprule
    $|$Train$|$=3000  &    \method-U   & \method-L \\
    \midrule
      $P=2$    &  0.0062     &  0.0067 \\
      $P=5$    &  0.0056     &  0.0055 \\
      $P=10$    & 0.0057      &  0.0054 \\
    \bottomrule
    \end{tabular}%
  \label{tab:n_body_P}%

\end{minipage}
\hfill
\begin{minipage}[!t]{.31\textwidth}

\small
  \centering
  \caption{MSE ($\times 10^{-2}$) on the motion capture dataset. The upper part is F-MSE for \stos and the lower part is A-MSE for \stot.}
  \resizebox{\textwidth}{!}{
    \begin{tabular}{l|cc}
    \toprule
          & Subject \#35 & Subject \#9 \\
          & \texttt{Walk}  & \texttt{Run} \\
    \midrule
    MPNN   & 36.1 {\small{$\pm$1.5}} & 66.4 {\small{$\pm$2.2}} \\
    RF    & 188.0 {\small{$\pm$1.9}}  &   521.3{\small{$\pm$2.3}}   \\
    TFN  & 32.0 {\small{$\pm$1.8}} & 56.6 {\small{$\pm$1.7}} \\
    SE(3)-Tr.& 31.5 {\small{$\pm$2.1}} & 61.2 {\small{$\pm$2.3}} \\
    EGNN & 28.7 {\small{$\pm$1.6}} & 50.9 {\small{$\pm$0.9}} \\
    EGNN-R &  90.7 {\small{$\pm$2.4}} &  816.7 {\small{$\pm$2.7}} \\
   EGNN-S  & 26.4 {\small{$\pm$1.5}} & 54.2 {\small{$\pm$1.9}} \\
       \midrule
    \method & \bf 8.1 {\small{$\pm$1.6}} & \bf 33.9 {\small{$\pm$1.7}} \\
    \midrule 
    \midrule
    EGNN-R & 32.0 {\small{$\pm$1.6}} &  277.3 {\small{$\pm$1.8}} \\
    EGNN-S & 14.3 {\small{$\pm$1.2}} &  28.5 {\small{$\pm$1.3}} \\
    \midrule 
    \method & \bf 3.5 {\small{$\pm$0.5}} & \bf 14.9 {\small{$\pm$0.9}} \\
    \bottomrule
    \end{tabular}%
    }
  \label{tab:mocap}%
\end{minipage}

\end{table*}

In this section, we evaluate \method in various scenarios, including N-body simulation (\textsection~\ref{subsec:exp-nbody}), motion capture (\textsection~\ref{subsec:exp-mocap}), and molecular dynamics (\textsection~\ref{subsec:exp-md}) on small molecules (\textsection~\ref{subsec:exp-md17}) and proteins (\textsection~\ref{subsec:exp-protein}). In~\textsection~\ref{subsec:exp-ablation}, we perform ablation studies to investigate the necessity of our core designs in \method. We provide extensive visualizations in~\Cref{app:subsec:vis}.

\subsection{N-body System Simulations}
\label{subsec:exp-nbody}

\textbf{Dataset and implementation.} We adopt the 3D N-body simulation dataset~\citep{satorras2021en} which comprises multiple trajectories depicting the dynamical system formed by $N$ charged particles, with movements driven by Coulomb force.
We follow the experimental setup of~\cite{satorras2021en}, with $N=5$, time window $\Delta T=10$, and 3000/2000/2000 trajectories for training/validation/testing. We use uniform discretization with $P=5$, with more details deferred to~\Cref{app:sec:dataset-details}.

\textbf{Evaluation metrics.} We perform comparisons on two tasks, namely state-to-state (\stos) and state-to-trajectory (\stot). The two tasks measure the prediction accuracy of either the final state or the whole decoded dynamics trajectories, respectively. Formally, for~\stos, we calculate Final Mean Squared Error (F-MSE), the MSE between the predicted \emph{final} state and the ground truth, \ie, $\text{F-MSE}=\| \rvx(t_P)-\rvx^{\dagger}(t_P) \|^2$ where $\rvx^{\dagger}$ is the ground truth position. For~\stot, we use Average MSE (A-MSE), which instead computes the MSE averaged across all discretized time steps along the decoded trajectory, \ie, $\text{A-MSE}=\frac{1}{P} \sum_{p=1}^P\|\rvx(t_p)-\rvx^{\dagger}(t_p)   \|^2$. These metrics are employed throughout all experiments unless otherwise specified.

\textbf{Baselines.} For \stos, we include Linear Dynamics (Linear)~\citep{satorras2021en}, SE(3)-Transformer (SE(3)-Tr.)~\citep{fuchs2020se}, Tensor Field Networks (TFN)~\citep{thomas2018tensor}, Message Passing Neural Network (MPNN)~\citep{gilmer2017neural}, Radial Field (RF)~\citep{kohler2019equivariant}, ClofNet~\citep{du2022se3}, and EGNN~\citep{satorras2021en}. They ignore the temporal dependency and directly predict the last snapshot. For \stot, we extend the most competitive baseline EGNN to two variants: \textbf{EGNN-R}oll is an EGNN trained on the shortest time span and tested by iteratively rolling out~\citep{sanchez2020learning}. \textbf{EGNN-S}equential sequentially reads out each frame of the trajectory from each EGNN layer.

\textbf{Results.} The results are listed in Table~\ref{tab:n_body}, where the numbers of the baselines are taken from~\citet{satorras2021en}. We have the following findings: \textbf{1.} Our \method achieves the lowest error of 0.0054 in~\stos~setting, yielding an 18\% relative enhancement over the most competitive baseline EGNN. \method also surpasses other trajectory modeling variants in~\stot~setting by a considerable margin. \textbf{2.} EGNN-R performs poorly compared with other trajectory modeling approaches, since the error is dramatically accumulated in each roll-out step during testing. EGNN-S is also inferior to \method, due to insufficient excavation of the temporal dependency along the entire dynamics trajectory.


\begin{table*}[!t]
  \centering
\setlength{\tabcolsep}{3pt}
  \caption{MSE ($\times 10^{-2}$) on MD17 dataset. Upper part: F-MSE for \stos. Lower part: A-MSE for \stot.}
\resizebox{\textwidth}{!}{
    \begin{tabular}{lcccccccc}
    \toprule
          & Aspirin & Benzene & Ethanol & Malonaldehyde & Naphthalene & Salicylic & Toluene & Uracil \\
    \midrule
    RF~\cite{kohler2019equivariant}    & 10.94\tiny{$\pm$0.01}   & 103.72\tiny{$\pm$1.29}   & 4.64\tiny{$\pm$0.01}   & 13.93\tiny{$\pm$0.03}   & 0.50\tiny{$\pm$0.01}   & 1.23\tiny{$\pm$0.01}   & 10.93\tiny{$\pm$0.04}   & 0.64\tiny{$\pm$0.01}   \\
    TFN~\cite{thomas2018tensor}   & 12.37\tiny{$\pm$0.18}   & 58.48\tiny{$\pm$1.98}   & 4.81\tiny{$\pm$0.04}   & 13.62\tiny{$\pm$0.08}   & 0.49\tiny{$\pm$0.01}   & 1.03\tiny{$\pm$0.02}   & 10.89\tiny{$\pm$0.01}   & 0.84\tiny{$\pm$0.02}   \\
    SE(3)-Tr.~\cite{fuchs2020se} & 11.12\tiny{$\pm$0.06}   & 68.11\tiny{$\pm$0.67}   & 4.74\tiny{$\pm$0.13}   & 13.89\tiny{$\pm$0.02}   & 0.52\tiny{$\pm$0.01}   & 1.13\tiny{$\pm$0.02}   & 10.88\tiny{$\pm$0.06}   & 0.79\tiny{$\pm$0.02}  \\
    EGNN~\cite{satorras2021en}  & 14.41\tiny{$\pm$0.15}  & 62.40\tiny{$\pm$0.53}  & 4.64\tiny{$\pm$0.01}  & 13.64\tiny{$\pm$0.01}  & 0.47\tiny{$\pm$0.02}  & 1.02\tiny{$\pm$0.02}  & 11.78\tiny{$\pm$0.07}  & 0.64\tiny{$\pm$0.01}  \\
    EGNN-R~\cite{satorras2021en} & 14.51\tiny{$\pm$0.19}  & 62.61\tiny{$\pm$0.75} & 4.94\tiny{$\pm$0.21} & 17.25\tiny{$\pm$0.05} & 0.82\tiny{$\pm$0.02} & 1.35\tiny{$\pm$0.02} & 11.59\tiny{$\pm$0.04} & 1.11\tiny{$\pm$0.02} \\
      EGNN-S~\cite{satorras2021en} & 9.50\tiny{$\pm$0.10} &  66.45\tiny{$\pm$0.89} &  4.63\tiny{$\pm$0.01} & 12.88\tiny{$\pm$0.01} & 0.45\tiny{$\pm$0.01} & 1.00\tiny{$\pm$0.02} & 10.78\tiny{$\pm$0.05} & 0.60\tiny{$\pm$0.01} \\
    \midrule
      \method & {\bf 9.18}\tiny{$\pm$0.06} & {\bf 48.85}\tiny{$\pm$0.55}  & {\bf 4.62}\tiny{$\pm$0.01}  & {\bf 12.80}\tiny{$\pm$0.02} & {\bf 0.37}\tiny{$\pm$0.01}  & {\bf 0.86}\tiny{$\pm$0.02} & {\bf 10.21}\tiny{$\pm$0.05} & {\bf 0.52}\tiny{$\pm$0.02} \\
      \midrule
    \midrule
    EGNN-R~\cite{satorras2021en} & 12.07\tiny{$\pm$0.11}  & 23.73\tiny{$\pm$0.30} & 3.44\tiny{$\pm$0.17} & 13.38\tiny{$\pm$0.03} & 0.63\tiny{$\pm$0.01} & 1.15\tiny{$\pm$0.02} & 5.04\tiny{$\pm$0.02} & 0.89\tiny{$\pm$0.01} \\
      EGNN-S~\cite{satorras2021en} & 9.49\tiny{$\pm$0.12} & 29.99\tiny{$\pm$0.65}  &  3.29\tiny{$\pm$0.01} & 11.21\tiny{$\pm$0.01} &  0.43\tiny{$\pm$0.01} & 1.36\tiny{$\pm$0.02} & 4.85\tiny{$\pm$0.04} & 0.68\tiny{$\pm$0.01} \\
      \midrule
    \method  & {\bf 7.37}\tiny{$\pm$0.07} & {\bf 22.41}\tiny{$\pm$0.31} & {\bf 3.28}\tiny{$\pm$0.02} & {\bf 10.67}\tiny{$\pm$0.01} & {\bf 0.32}\tiny{$\pm$0.01} & {\bf 0.77}\tiny{$\pm$0.01} & {\bf 4.58}\tiny{$\pm$0.03} & {\bf 0.47}\tiny{$\pm$0.01} \\

    
    \bottomrule
    \end{tabular}%
    }
  \label{tab:md17}%
\end{table*}%

\textbf{Data efficiency.} We compare \method with EGNN under different data regimes where the size of the training set sweeps over 1000, 3000, and 10000. As depicted in Table~\ref{tab:n_body_datasize}, \method{}s can achieve significantly lower simulation error than EGNN in all scenarios, and are observed to be nearly 3$\times$ more data-efficient.

\textbf{The approach of temporal discretization.} We study two temporal discretization methods, namely \textbf{\method-U}, which \emph{uniformly} samples $\{t_p\}_{p=1}^P$ with $t_p=t+\frac{p}{P}\Delta T$, and \textbf{\method-L}, which selects the \emph{last} $P$ snapshots at the tail of the trajectory with interval $\delta$, \ie, $t_p=t+\Delta T - \delta (P-p)$. As shown in Table~\ref{tab:n_body_datasize} and~\ref{tab:n_body_P}, we find that \method obtains promising performance regardless of the discretization method. \method-L performs slightly better than \method-U in terms of F-MSE when data is scarce since it gives a more detailed description of the part near the end of the trajectory. However, \method-U becomes better when $P=2$, because uniform discretization is more informative in depicting the entire dynamics. We provide more results and analyses on other datasets in \Cref{app:discretization_methods}.

\textbf{The number of discretized time steps.} To investigate how the granularity of the temporal discretization influences the performance, we switch the number of time steps $P$ in training within $\{2,5,10\}$. Interestingly, in Table ~\ref{tab:n_body_P}, \method with $P=2$ outperforms EGNN, while the performance is further boosted when $P$ is increased to $5$, which aligns with our proposal of leveraging trajectory modeling over snapshots. The MSEs of $P=10$ remain close to $P=5$ with no clear enhancement, possibly because the trajectory with $P=5$ has already been sufficiently informative for its geometric pattern to be abundantly captured by the equivariant Fourier temporal convolution modules.

\subsection{Motion Capture}
\label{subsec:exp-mocap}

\textbf{Dataset and implementation.} We further benchmark our model on CMU Motion Capture dataset~\citep{cmu2003motion}, which involves 3D trajectories captured from various human motion movements. We focus on two motions: Subject \#35 (\texttt{Walk}) and Subject \#9 (\texttt{Run}), following the setups and data splits in~\citet{huang2022equivariant,han2022equivariant}. Similar to N-body simulation, the input includes initial positions and velocities, and $\Delta T=30$. We use $P=5$ and uniform discretization by default. More details are in~\Cref{app:sec:exp-details}.


\textbf{Results.} As exhibited in Table~\ref{tab:mocap}, our \method surpasses the baselines by a large margin on both \texttt{Walk} and \texttt{Run} and in both \stos~and \stot~cases, \eg, up to around $52\%$ on average in \stos. The error propagation of EGNN-R becomes more severe since the motion capture involves more complex and varying dynamics than N-body. EGNN-S performs comparably to other \stos~methods, while still being inferior to \method which models the entire trajectory compactly.

\begin{figure*}[t!]

\begin{minipage}{0.26\textwidth}
  \begin{center}
  \vskip +0.15in
    \includegraphics[width=\textwidth]{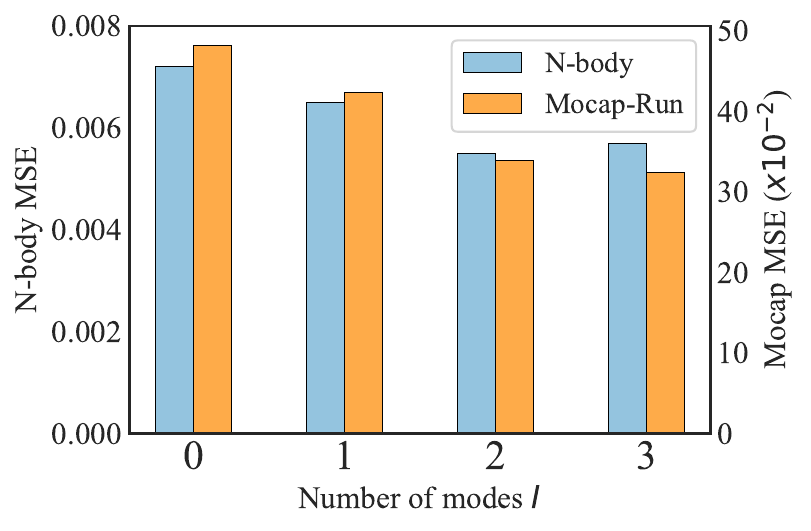}
  \end{center}
  \vskip -0.15in
  \caption{Ablation studies on the number of modes $I$ on N-body simulation and Mocap-\texttt{Run} datasets.}
  \label{fig:modes}
\end{minipage}
\hfill
\begin{minipage}{0.7\textwidth}
    \centering
    \includegraphics[width=0.99\linewidth]{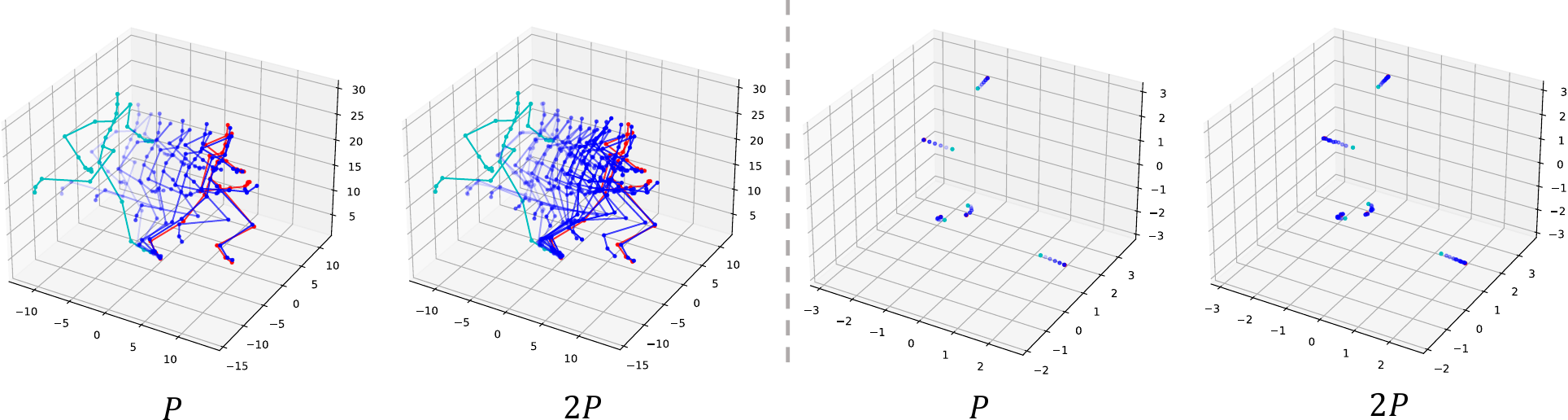}
    \caption{Qualitative results of zero-shot generalization towards discretization steps. Sub-figures indexed by $P$ or $2P$ have $P$ or $2P$ timesteps while sharing exactly the same initial conditions. Left: Motion Capture \texttt{Run}. Right: N-body simulation. Best viewed in color.}
    \label{app:fig:superreso}
\end{minipage}
\end{figure*}

\subsection{Molecular Dynamics}
\label{subsec:exp-md}

\subsubsection{Small Molecules}
\label{subsec:exp-md17}


\textbf{Dataset.} We adopt MD17~\citep{chmiela2017machine} dataset to evaluate the capability of our \method on modeling molecular dynamics. The dataset consists of the molecular dynamics trajectories of eight small molecules, including aspirin, benzene, \emph{etc}. We follow the setup and split by~\citet{huang2022equivariant} which randomly partitions each trajectory into 500/2000/2000 subsets for training/validation/testing while $\Delta T$ is chosen to be 3000. We use $P=8$ and uniform discretization by default.

\textbf{Results.} The results are illustrated in Table~\ref{tab:md17}. \method obtains the best performance on all eight molecules, verifying the applicability of \method towards modeling molecular dynamics. The performance gain is most notable on Aspirin, one of the most complicated structures on MD17, where \method, with an MSE of $9.18\times 10^{-2}$, outperforms its backbone EGNN by 36\%.  Compared with EGNN-R/S, \method consistently gives better predictions for both~\stos~and~\stot~evaluations, showing the importance of the equivariant temporal convolution in geometric space.

\subsubsection{Proteins}
\label{subsec:exp-protein}
\textbf{Dataset and implementation.} We use the Adk equilibrium trajectory dataset~\citep{seyler5108170molecular} integrated in the MDAnalysis~\citep{oliver_beckstein-proc-scipy-2016} toolkit, which depicts the molecular dynamics trajectory of apo adenylate kinase. We follow the setting and split of~\citet{han2022equivariant}. We use $P=4$ by default.
We additionally consider EGHN~\citep{han2022equivariant}, the state-of-the-art model on this benchmark, as a baseline. To investigate the compatibility of our approach with different backbones, we incorporate equivariant temporal convolution into EGHN by inserting it to the end of each block in EGHN (details in \Cref{app:subsec:arch}) and name it EGHNO.

\textbf{Results.} The results are presented in Table~\ref{tab:protein}. Equipped with our equivariant temporal convolution, EGHNO achieves state-of-the-art on this challenging benchmark. Furthermore, both \method and EGHNO offer significant increment to the performance to their original backbones, \ie, EGNN and EGHN, showing that our equivariant temporal convolution is highly compatible with various backbones, indicating its broad applicability to a diverse range of models and tasks. 

\begin{table}[!t]
  \centering
    \setlength{\tabcolsep}{3pt}
  \caption{F-MSE on AdK equilibrium trajectory dataset.}
    \resizebox{\columnwidth}{!}{
    \begin{tabular}{ccccc|cc}
        \toprule
    Linear & RF & MPNN & EGNN & EGHN & EGNO & EGHNO \\
    \midrule
    2.890 & 2.846  & 2.322 & 2.735 & 2.034 & 2.231  & \bf 1.801  \\
    \bottomrule
    \end{tabular}%
    }
  \label{tab:protein}%

\small
  \centering
  \caption{Ablation studies on N-body simulation and Mocap-\texttt{Run} datasets. Numbers refer to the F-MSE ($\times 10^{-2}$ for \texttt{Run}). Columns with $\rvh,\rvx,\rvv$ indicate whether such geometric information is processed in \eqref{eq:method-temporal-conv}.}
    \begin{tabular}{ccccc|cc}
    \toprule
        &   & $\rvh$ & $\rvx$ & $\rvv$ & N-body  & Mocap-\texttt{Run} \\
    \midrule
    \multirow{3}[2]{*}{EGNO}  & I & $\checkmark$ & $\checkmark$ & $\checkmark$  &  \textbf{0.0055}     & \textbf{33.9} \\
          & II &  $\checkmark$  &  $\checkmark$  &   -    &   0.0057    &  35.6 \\
          & III &  $\checkmark$  &    -   &   -    &  0.0061     &  39.1 \\
    &  IV   &    -   &   -    &   -    &  0.0072     &  48.2 \\
    \midrule
    \multicolumn{5}{c|}{EGNN}     &   0.0071    & 50.9 \\
    \bottomrule
    \end{tabular}%
  \label{tab:ablation}%
\vspace{-10pt}
\end{table}

\subsection{Ablation Studies}
\label{subsec:exp-ablation}
We conduct ablation studies on the N-body system simulation task and motion capture dataset to inspect the role of the designed parts, with the results summarized in Table~\ref{tab:ablation}.


\textbf{Incorporating geometric information in temporal convolution.} 
In~\textsection~\ref{subsec:method-timeconv}, we propose to involve temporal convolution on geometric features $[\rvh,\rmZ]$ in frequency domain. To investigate its importance, we implement three variants of \method: variant IV discards the entire temporal convolution and reduces to an EGNN with time embeddings; variant III only processes the invariant feature $\rvh$ in the temporal convolution; variant II considers $\rvh,\rvx$ but neglects $\rvv$. Interestingly, compared with the complete version of \method (labeled by I) which incorporates $\rvh,\rvx,\rvv$, variant II discards the velocity $\rvv$ in temporal convolution, which brings performance detriment on both datasets. By further removing all directional information, variant III, with invariant $\rvh$ only, incurs much worse performance. Without temporal convolution (IV), the model cannot sufficiently capture temporal patterns and suffers from the worst performance. Yet and still, all variants outperform EGNN remarkably thanks to geometric temporal convolution.

\textbf{Number of modes.} We study the influence of the number of modes $I$ reserved in equivariant Fourier convolution with results in Figure~\ref{fig:modes}. The simulation errors are decreased notably when $I$ is increased from 0 to 2 for each feature channel, with more temporal patterns with geometric information captured and encoded in the convolution. When $I$ is further increased to 3, the performance improves marginally on Mocap-\texttt{Run} or even becomes worse on N-body, potentially because redundant frequencies, which encode noisy patterns, are encapsulated and thus lead to overfitting.


\subsection{Zero-shot Generalization to Discretization Steps}

We investigate how \method generalizes to different choices of discretization steps $P$ in a \emph{zero-shot} manner. Specifically, we directly use the model pretrained with the default discretization steps $P$ to conduct inference for the increased number of time steps $2P$. In practice, given the time embeddings of input, $\{\Delta t_1,\cdots,\Delta t_P\}$, we uniformly interpolate in between for the additional timesteps $\{\Delta t_1/2, \Delta t_1, (\Delta t_1+\Delta t_2)/2, \Delta t_2,\cdots, (\Delta t_{P-1}+\Delta t_P)/2 , \Delta t_P  \}$, $2P$ points in total. This process increases the temporal resolution without any additional training.
We provide qualitative results in Figure~\ref{app:fig:superreso}. Interestingly, \method generalizes to the increased $2P$ temporal resolution with accurate and smooth trajectories on top of the low-resolution $P$ timesteps counterparts. This capability is remarkably meaningful in the sense that we could potentially train our \method{}s on low temporal resolution but conduct offline inference per the user's requirement on varying temporal resolution.
\method enjoys the strong generalization capacity since it directly models the temporal correlations in the frequency domain instead of leveraging temporal operations like rolling out. This observation aligns with the discretization invariance of Fourier Neural Operators (FNOs) manifested in~\citet{li2021fourier}.

\section{Conclusion}

In this paper, we present equivariant graph neural operator (\method), a principled method that models physical dynamics by explicitly considering the temporal correlations. Our key innovation is to formulate the dynamics as an equivariant function describing state evolution over time and learn neural operators to approximate it. To this end, we develop a novel equivariant temporal convolution layer parameterized in the Fourier space. Comprehensive experiments demonstrate its superior performance in modeling geometric dynamics. With \method as a general framework, future work includes extending it to other physical dynamics domains such as astronomical objects, or scaling up to more challenging dynamics, \textit{e.g.}, fluids or deformable materials.

\section*{Acknowledgments}

We thank Hongkai Zheng for constructive methodology suggestions, Vignesh Bhethanabotla and Shengchao Liu for general discussions.
We gratefully acknowledge the support of HAI, NSF(\#1651565), ARO (W911NF-21-1-0125), ONR (N00014-23-1-2159), CZ Biohub. 
We also gratefully acknowledge the support of DARPA under Nos. N660011924033 (MCS); NSF under Nos. OAC-1835598 (CINES), CCF-1918940 (Expeditions), DMS-2327709 (IHBEM); Stanford Data Applications Initiative, Wu Tsai Neurosciences Institute, Stanford Institute for Human-Centered AI, Chan Zuckerberg Initiative, Amazon, Genentech, GSK, Hitachi, SAP, and UCB.
Minkai Xu thanks the generous support of Sequoia Capital Stanford Graduate Fellowship.


\section*{Impact Statement}

This paper presents a temporal equivariant neural operator for modeling 3D dynamics, with the goal to advance fundamental Machine Learning research. There are some potential societal consequences of our work that are related to 3D dynamics modeling, none of which we feel must be specifically highlighted here.

\bibliography{ref}
\bibliographystyle{icml2024}

\newpage
\onecolumn
\appendix

\section{Formal Statements and Proofs}
\label{app:sec:proof}

\subsection{Statement of Universality}
\label{app:subsec:proof-universal}

We include the formal statement from \cite{kovachki2021universal,kovachki2021neural} for the modeling capacity of \method here to make the paper self-contained. 
\begin{proposition}
    The neural operators defined in \cref{eq:prelim-fno} approximates the solution operator of the physical dynamics ODE \cref{eq:method-dynamics-ode}, \ie, the mapping from a current structure $\gG^{(t)}$ to future trajectory $\{\gG^{(t+\Delta t)}: \Delta t \sim [0, \Delta T] \}$, arbitrarily well.
\end{proposition}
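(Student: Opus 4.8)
The plan is to recognize the claim as a direct instantiation of the Fourier neural operator universal approximation theorem of \citet{kovachki2021universal,kovachki2021neural} for the particular solution operator $F^\dagger$ associated with \cref{eq:method-dynamics-ode}. Concretely, I would establish that $F^\dagger$ is a continuous map between suitable Banach spaces of functions on the temporal domain $D=[0,\Delta T]$ and that the data of interest lies in a compact set; the universality theorem then supplies FNO approximants of the form \cref{eq:prelim-fno} to any prescribed accuracy. As the first step I fix the function-space setup: the current state $\gG^{(t)}=[\rvh,\rmZ^{(t)}]$ is a finite-dimensional object, which I lift to the input space $\gA = C(D;\sR^{N\times(k+m\times 3)})$ via the constant embedding $t'\mapsto \gG^{(t)}$, while the target lives in $\gU = C(D;\sR^{N\times(k+m\times 3)})$, with $F^\dagger$ sending the (constant) initial condition to the trajectory $\Delta t\mapsto \gG^{(t+\Delta t)}$.

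Next I would verify continuity of $F^\dagger$. Writing the right-hand side of \cref{eq:method-dynamics-ode} as $\Phi(\rmZ)=[\rvv,-\rvr-\gamma\rvv]^\ermT$ and assuming the interaction force $\rvr$ is Lipschitz in the state (which holds for the smooth physical potentials considered here, on any bounded region visited by the trajectories over the finite window $\Delta T$), Picard--Lindel\"of guarantees a unique solution on $D$. A standard Gr\"onwall estimate then yields $\|F^\dagger(\gG_1)-F^\dagger(\gG_2)\|_{C(D)}\le e^{L\Delta T}\|\gG_1-\gG_2\|$, so $F^\dagger$ is Lipschitz, hence continuous, as an operator $\gA\to\gU$.

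Finally I would invoke the universality theorem. Since $p_{\text{data}}$ has compact support, and in any case the claim need only hold on a compact set $K\subset\gA$, the cited FNO universal approximation result applies to the continuous operator $F^\dagger$: for every $\epsilon>0$ there exists an FNO $F_\theta$ of the form \cref{eq:prelim-fno} with $\sup_{\gG\in K}\|F^\dagger(\gG)-F_\theta(\gG)\|_{\gU}<\epsilon$, which is exactly the statement of the proposition.

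I expect the main obstacle to be matching the precise hypotheses of the cited universality theorem rather than the ODE analysis itself. In particular, one must verify that $F^\dagger$ is continuous between the exact function spaces used in \citet{kovachki2021universal} (typically Sobolev or continuous functions over a periodic or bounded domain) and confirm that restricting to the compact support of $p_{\text{data}}$ is legitimate. The ODE regularity needed for continuity---Lipschitz force and uniform boundedness of trajectories over $[0,\Delta T]$---is routine, but it must be made explicit as a hypothesis so that Gr\"onwall's inequality, and therefore the continuity of $F^\dagger$ on which the whole argument rests, genuinely holds.
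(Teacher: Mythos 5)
Your proposal follows the same route as the paper: the paper gives no proof of this proposition at all, but simply imports it as a formal statement from \citet{kovachki2021universal,kovachki2021neural}, which is exactly the FNO universal approximation theorem you invoke. Your elaboration---lifting the initial state to a constant function on $D$, establishing continuity of $F^\dagger$ via Picard--Lindel\"of and a Gr\"onwall estimate under a Lipschitz force assumption, and restricting to a compact set before applying the cited theorem---supplies precisely the hypothesis-checking that the paper leaves implicit, so it is a correct and strictly more complete version of the same argument.
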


\subsection{Proof of Theorem \ref{theorem:equivariance}}
\label{app:subsec:proof-equivariance}

In this section, we prove Theorem \ref{theorem:equivariance}, which justifies that our proposed architecture is SO($3$)-equivariant. We recall that our action is defined on functions $f = [f_\rvh, f_\rmZ]^\ermT: D \to \R^{N \times (k + m \times 3)}$ (where $f_\rvh$ is the $D \to \R^{N \times k}$ node feature component and $f_\mathbf{Z}: D \to \R^{N \times m \times 3}$ is the spatial feature component) by
\begin{equation}
    (\mathbf{R} f)(t) = [f_\mathbf{h}(t), \mathbf{R} f_\mathbf{Z}(t)]^\ermT.
\end{equation}

We first analyze the action of the Fourier transform, showing that it is equivariant:

\begin{lemma}[Fourier Action Equivariance]\label{lemma:fourier}
    The actions of the Fourier and Inverse Fourier Transform are SO($3$)-equivariant.
\end{lemma}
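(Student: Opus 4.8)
The plan is to exploit the fact that the rotation $\mathbf{R}$ and the (inverse) Fourier transform act on two disjoint axes of the feature tensor: $\gF$ and $\gF^{-1}$ transform only the temporal index $t \in D$, whereas $\mathbf{R} \in \mathrm{SO}(3)$ acts only on the trailing 3D coordinate axis of $f_\rmZ$. Since both are linear maps, operations on disjoint tensor indices commute, and equivariance follows. I would organize the argument along the block structure $f = [f_\rvh, f_\rmZ]^\ermT$ and treat the two blocks separately.

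First, the node-feature block is immediate. By definition of the action, $(\mathbf{R} f)_\rvh = f_\rvh$, so applying $\gF$ gives $\gF((\mathbf{R} f)_\rvh) = \gF(f_\rvh)$; this component is trivially (indeed invariantly) equivariant and presents no obstruction. The substance of the lemma lies in the spatial block. Writing the temporal Fourier transform explicitly at frequency mode $i$ as $(\gF f_\rmZ)_i = \int_D f_\rmZ(t)\, e^{-2\pi \mathrm{i} \omega_i t}\, \mathrm{d}t$ (or its discrete analogue for the FFT used in \cref{eq:method-temporal-conv}), I would compute $\gF((\mathbf{R} f)_\rmZ)_i = \int_D \mathbf{R} f_\rmZ(t)\, e^{-2\pi \mathrm{i}\omega_i t}\,\mathrm{d}t$ and pull the constant matrix $\mathbf{R}$ outside the integral to obtain $\mathbf{R}(\gF f_\rmZ)_i$, i.e. $\gF(\mathbf{R} f) = \mathbf{R}(\gF f)$.

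The two elementary facts that justify the pull-through step are: (a) $\mathbf{R}$ does not depend on $t$, so linearity of the integral gives $\int_D \mathbf{R}(\cdot)\,\mathrm{d}t = \mathbf{R}\int_D (\cdot)\,\mathrm{d}t$; and (b) the scalar kernel $e^{-2\pi \mathrm{i}\omega_i t}$ commutes with $\mathbf{R}$. For (b) one should note the only genuine subtlety: although the Fourier coefficients are complex-valued (elements of $\sC^3$ along the spatial axis), a real orthogonal matrix $\mathbf{R}$ extends to $\sC^3$ by acting independently on real and imaginary parts, and therefore commutes with multiplication by any complex scalar. The inverse transform $\gF^{-1}$ is handled by the identical computation, since it too is a linear map over the frequency index with a scalar exponential kernel, giving $\gF^{-1}(\mathbf{R} g) = \mathbf{R}(\gF^{-1} g)$.

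I expect no real obstacle here; the argument is pure linearity, and the only care required is bookkeeping — tracking which axis each operator acts on and verifying the real-matrix/complex-scalar commutation in (b). This lemma then feeds directly into the proof of \Cref{theorem:equivariance}: combined with the block-diagonal multiplier $\rmM_\theta$ of \cref{eq:method-temporal-conv} (whose $\rmM^\rmZ_\theta$ block acts by scalars on the 3D vectors and hence also commutes with $\mathbf{R}$) and the norm-based nonlinearity $\hat{\sigma}$ of \cref{eq:method-nonlinear}, one concludes equivariance of the full layer $\gT_\theta$.
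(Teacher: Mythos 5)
Your proof is correct, and its core is the same as the paper's: both split $f$ into the $f_\rvh$ and $f_\rmZ$ blocks, observe that the $\rvh$ block is untouched by the group action, and establish $\gF(\rmR\, f_\rmZ) = \rmR\,\gF(f_\rmZ)$ purely from linearity of $\gF$ plus the fact that $\rmR$ acts on the spatial axis, disjoint from the temporal axis that $\gF$ transforms; the paper writes this linearity out component-wise (each rotated component is a linear combination of components, and $\gF$ commutes with linear combinations), while you pull the constant matrix $\rmR$ through an explicit integral kernel, which is the same computation in different notation. The one genuine divergence is the inverse transform: you handle $\gF^{-1}$ by repeating the identical direct kernel computation, whereas the paper uses a categorical argument --- apply $\gF$ to $\rmR \cdot \gF^{-1} f_\rmZ$, use $\gF \gF^{-1} = \mathrm{id}$, and cancel --- which, because of mode truncation, obliges it to check that the image of $\gF^{-1}$ (functions with vanishing Fourier coefficients for degrees $> I$) is closed under the action of $\rmR$, so that inverting on the left is legitimate. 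Your direct route sidesteps that truncation subtlety entirely, a small but real simplification; the paper's route buys the inverse statement as a formal consequence of the forward one without redoing any computation, and the paper itself notes your route ``could easily'' be taken. Your explicit remark that a real orthogonal $\rmR$, extended complex-linearly, commutes with multiplication by complex scalars is a point the paper leaves implicit, and your closing observations about the block-diagonal $\rmM^{\rmZ}_\theta$ acting by scalars and the norm-based nonlinearity match exactly how the paper assembles Theorem~\ref{theorem:equivariance} from this lemma.
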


\begin{proof}
$\mathcal{F}$ is a dimension-wise Fourier transform that maps from $(\R^D \to \R^\gG) \to \mathbb{C}^{I \times \gG}$, where $I$ is the truncation on the number of the Fourier coefficients. For our purposes, this maps our input function $f: D \to \R^{N\times (k + m \times 3)}$ to $\mathbb{C}^{I \times (k + m \times 3)}$ space\footnote{Note that, the Fourier transform only operates on the temporal dimension and the node dimension $N$ can be trivially regarded as the same as the batch dimension, so we omit that dimension in our proof.}. The action of $\rmR$ is the standard matrix-tensor action on the $\mathcal{F} f_\mathbf{Z} \in \mathbb{C}^{I \times m \times 3}$ component
    \begin{equation}
        \mathbf{R} \cdot \mathcal{F}f = \mathbf{R} \cdot [\mathcal{F}f_\rvh, \mathcal{F}f_\rmZ]^\ermT = [\mathcal{F}f_\mathbf{h}, \mathbf{R} \cdot \mathcal{F}f_\mathbf{Z}]^\ermT
    \end{equation}
    Conversely, we have that
    \begin{equation}
        \mathcal{F} (\mathbf{R} \cdot f) = \mathcal{F}[f_\mathbf{h}, \mathbf{R} \cdot f_\mathbf{Z}]^\ermT = [\mathcal{F}f_\mathbf{h}, \mathcal{F}(\mathbf{R} \cdot f_\mathbf{Z})]^\ermT
    \end{equation}
    We need to show that $\mathbf{R} \cdot \mathcal{F} f_\mathbf{Z} = \mathcal{F}(\mathbf{R} \cdot f_\mathbf{Z})$. This can be done by recalling that the Fourier transform $\mathcal{F}$ is a linear operator (even when it is discrete as given here). As such, we can write out the equivariance directly here on a dimension-wise basis:
    \begin{align}
        (\mathbf{R} \cdot \mathcal{F} f_\mathbf{Z})_{ij\cdot} &= \mathbf{R} \begin{bmatrix} (\mathcal{F} f_\mathbf{Z})_{ij1} \\ (\mathcal{F} f_\mathbf{Z})_{ij2} \\ (\mathcal{F} f_\mathbf{Z})_{ij3}\end{bmatrix}\\
        &= \begin{bmatrix} R_{11} (\mathcal{F} f_\mathbf{Z})_{ij1}  + R_{12} (\mathcal{F} f_\mathbf{Z})_{ij2} + R_{13} (\mathcal{F} f_\mathbf{Z})_{ij3}\\ R_{21} (\mathcal{F} f_\mathbf{Z})_{ij1}  + R_{22} (\mathcal{F} f_\mathbf{Z})_{ij2} + R_{23} (\mathcal{F} f_\mathbf{Z})_{ij3}\\ R_{31} (\mathcal{F} f_\mathbf{Z})_{ij1}  + R_{32} (\mathcal{F} f_\mathbf{Z})_{ij2} + R_{33} (\mathcal{F} f_\mathbf{Z})_{ij3} \end{bmatrix}\\
        &= \begin{bmatrix} \mathcal{F}((R_{11} f_\mathbf{Z})_{ij1}  + (R_{12} f_\mathbf{Z})_{ij2} + (R_{13} f_\mathbf{Z})_{ij3})\\ \mathcal{F}((R_{21} f_\mathbf{Z})_{ij1}  + (R_{22} f_\mathbf{Z})_{ij2} + (R_{23} f_\mathbf{Z})_{ij3})\\ \mathcal{F}((R_{31} f_\mathbf{Z})_{ij1}  + (R_{32} f_\mathbf{Z})_{ij2} + (R_{33} f_\mathbf{Z})_{ij3}) \end{bmatrix}\\
        &=(\mathcal{F} \cdot (\mathbf{R} f_\mathbf{Z}))_{ij\cdot}
    \end{align}
    which is the desired $\mathcal{F}$ equivariance.

    To show that the inverse Fourier transform $\mathcal{F}^{-1}$ acts equivariantly, we could easily apply the same linearity condition on $\mathcal{F}^{-1}$ and go through the details. However, we can do this more directly through a categorical argument:
    \begin{align*}
        \mathcal{F}(\mathbf{R} \cdot \mathcal{F}^{-1} f_\rmZ) = \mathbf{R} \cdot \mathcal{F} \mathcal{F}^{-1} f_\rmZ = \mathbf{R} \cdot f_\rmZ
        \implies \mathbf{R} \cdot \mathcal{F}^{-1} f_\rmZ = \mathcal{F}^{-1}(\mathbf{R} \cdot f_\rmZ)
    \end{align*}
    Note that $\mathcal{F}^{-1}$ is only right inverse of $\mathcal{F}$ and is only a full inverse when the function domain is restricted to all functions with trivial Fourier coefficients for degrees $> I$. However, it can be applied on the left side here since $\mathbf{R} \cdot \mathcal{F}^{-1} f_\rmZ$ is in this function class since this class is closed under scalar multiplication and addition (so it is closed under the action of $\mathbf{R}$).
\end{proof}

This leads directly to the full proof of the theorem.

\begin{proof}[Proof of Theorem \ref{theorem:equivariance}]
    Recall that our neural operator is given in full form by
    \begin{equation}
        \gT_\theta f = f + \sigma \circ (\gF^{-1} (
    \left[ \begin{matrix}
        \rmM^{\rvh}_\theta & \vzero \\
        \vzero & \rmM^{\rmZ}_\theta
    \end{matrix} \right]
    \cdot (\gF 
    \left[ \begin{matrix}
        f_\rvh \\
        f_\rmZ
    \end{matrix} \right]
    ))
    \end{equation}
    The equivariance is shown directly
    
    \begin{align}
        \mathbf{R} \cdot \gT_\theta f &= \mathbf{R} \cdot \left(f + \sigma \circ (\gF^{-1} (
    \left[ \begin{matrix}
        \rmM^{\rvh}_\theta & \vzero \\
        \vzero & \rmM^{\rmZ}_\theta
    \end{matrix} \right]
    \cdot (\gF 
    \left[ \begin{matrix}
        f_\rvh \\
        f_\rmZ
    \end{matrix} \right]
    ))\right)\\
    &= \mathbf{R} \cdot f + \mathbf{R} \cdot \sigma \circ (\gF^{-1} (
    \left[ \begin{matrix}
        \rmM^{\rvh}_\theta & \vzero \\
        \vzero & \rmM^{\rmZ}_\theta
    \end{matrix} \right]
    \cdot (\gF 
    \left[ \begin{matrix}
        f_\rvh \\
        f_\rmZ
    \end{matrix} \right]
    ))\\
    &= \mathbf{R} \cdot f + \sigma \circ \mathbf{R} \cdot (\gF^{-1} (
    \left[ \begin{matrix}
        \rmM^{\rvh}_\theta & \vzero \\
        \vzero & \rmM^{\rmZ}_\theta
    \end{matrix} \right]
    \cdot (\gF 
    \left[ \begin{matrix}
        f_\rvh \\
        f_\rmZ
    \end{matrix} \label{eqn:proof:sigma}\right]
    ))\\
    &= \mathbf{R} \cdot f + \sigma \circ (\gF^{-1} (\mathbf{R}\cdot 
    \left[ \begin{matrix}
        \rmM^{\rvh}_\theta & \vzero \\
        \vzero & \rmM^{\rmZ}_\theta
    \end{matrix} \right]
    \cdot (\gF 
    \left[ \begin{matrix}
        f_\rvh \\
        f_\rmZ
    \end{matrix} \right]
    )) \label{eqn:proof:invfourier}\\
    &= \mathbf{R} \cdot f + \sigma \circ (\gF^{-1} ( 
    \left[ \begin{matrix}
        \rmM^{\rvh}_\theta & \vzero \\
        \vzero & \rmM^{\rmZ}_\theta
    \end{matrix} \right] 
    \cdot \mathbf{R}\cdot (\gF 
    \left[ \begin{matrix}
        f_\rvh \\
        f_\rmZ
    \end{matrix} \right]
    )) \label{eqn:proof:matrix}\\
    &= \mathbf{R} \cdot f + \sigma \circ (\gF^{-1} ( 
    \left[ \begin{matrix}
        \rmM^{\rvh}_\theta & \vzero \\
        \vzero & \rmM^{\rmZ}_\theta
    \end{matrix} \right] 
    \cdot (\gF 
    \mathbf{R}\cdot \left[ \begin{matrix}
        f_\rvh \\
        f_\rmZ
    \end{matrix} \right]
    ))\label{eqn:proof:fourier}\\
    &= \gT_\theta(\mathbf{R} \cdot f)
    \end{align}
\end{proof}

Line \ref{eqn:proof:sigma} follows since $\sigma$ is an identity operator on $\mathbf{Z}$ component. Lines \ref{eqn:proof:invfourier} and \ref{eqn:proof:fourier} are the inverse and regular Fourier transforms and follows from Lemma \ref{lemma:fourier}. The only new line is \ref{eqn:proof:matrix}, which follows since $\mathbf{M}_\theta^\mathbf{Z}$ is a scalar multiplication for each component (which commutes with the matrix multiplication from $\mathbf{R}$). Therefore, we have that our neural operator is equivariant, as desired.

\section{Experiment Details}
\label{app:sec:exp-details}

\subsection{Dataset Details}
\label{app:sec:dataset-details}

\textbf{N-body Simulation.} Originally introduced in~\citet{kipf2018neural} and further extended to the 3D version by~\citet{satorras2021en}, the N-body simulation comprises multiple trajectories, each of which depicts a dynamical system formed by $N$ charged particles with their movements driven by the interacting Coulomb force. For each trajectory, the inputs are the charges, initial positions, and velocities of the particles.
We follow the experimental setup of~\cite{satorras2021en}, with $N=5$, time window $\Delta T=10$, and 3000 trajectories for training, 2000 for validation, and 2000 for testing. We take $P=5$ by default. In accordance with~\citet{satorras2021en}, the input node feature is instantiated as the magnitude of the velocity $\| \rvv \|_2$, the edge feature is specified as $c_ic_j$ where $c_i,c_j$ are the charges, and the graph is constructed in a fully-connected manner without self-loops.

\textbf{MD17.} MD17~\citep{chmiela2017machine} data consists of the molecular dynamics trajectories of eight
small molecules. We randomly split each one into train/validation/test sets with 500/2000/2000 pairs of state and later trajectories, respectively. We choose $\Delta T = 5000$ as the time window between
the input state and the last snapshot of the prediction trajectories, and calculate the difference between each as the input velocity. We also compute the norm of velocities and concatenate them with the atom type as the node feature. We follow the conventions in this field to remove the hydrogen atoms and focus on the dynamics of heavy atoms. For the graph structure, we follow previous studies~\citep{shi2021learning,xu2022geodiff} to expand the original molecular graph by connecting 2-hop neighbors. Then we take the concatenation of the hop type, the atomic types of connected nodes, and the chemical bond type as the edge feature.

\textbf{CMU Motion Capture.} CMU Motion Capture dataset~\citep{cmu2003motion} involves 3D trajectories captured from various human motion movements. We focus on two motions: Subject \#35 (\texttt{Walk}) and Subject \#9 (\texttt{Run}), following the setups and data splits in~\citet{huang2022equivariant,han2022equivariant}. Subject \#35 contains 200/600/600 trajectories for training/validation/testing, while Subject \#9 contains 200/240/240. We view the joints as edges and their intersections (31 in total) as nodes. Similar to N-body simulation, the input includes initial positions and velocities of the intersections, and $\Delta T=30$. We use $P=5$ and uniform discretization by default.

\textbf{Protein.} We use the preprocessed version~\citep{han2022equivariant} of the Adk equilibrium trajectory dataset~\citep{seyler5108170molecular} integrated in the MDAnalysis~\citep{oliver_beckstein-proc-scipy-2016} toolkit. In detail, the AdK equilibrium dataset depicts the molecular dynamics trajectory of apo adenylate kinase with CHARMM27 force field~\citep{mackerell2000development}, simulated with explicit water and ions in NPT at 300 K and 1 bar. The meta-data is saved every 240 ps for a total of 1.004 $\mu$s. 
We adopt the split by~\citet{han2022equivariant} which divides the entire trajectory into a training set with 2481 sub-trajectories, a validation set with 827, and a testing set with 878 trajectories, respectively. The backbone of the protein is extracted, and the graph is constructed using cutoff with radius 10\AA.

\subsection{More Implementation Details}

\textbf{Time embedding.} In \method, we need to make the model aware of the time position for the structures in the trajectory. To this end, we add ``time embeddings" to the input features. We implement the time embedding with sine and cosine functions of different frequencies, following the sinusoidal positional encoding in Transformer~\citep{vaswani2017attention}. For timestep $\Delta t_i$, the time embedding is implemented by:
\begin{equation}
\begin{aligned}
    & emb_{2j} = sin (i/10000^{2j/d_{\text{emb}}}), \\
    & emb_{2j+1} = cos (i/10000^{2j/d_{\text{emb}}}),
\end{aligned}
\end{equation}
where $d_{\text{emb}}$ denotes the dimension defined for time embeddings, as shown in Table~\ref{app:tab:param}.

\textbf{Fast Fourier Transform.} In this paper, the FFT algorithm is realized by PyTorch implementation. While FFT algorithms are often more efficient when applied to sequences whose lengths are powers of 2, many FFT implementations, including PyTorch implementation, have optimizations that allow them to efficiently process time series of arbitrary lengths. 

The requirement for power-of-2 lengths is related to certain specific FFT algorithms rather than a fundamental limitation of the FFT itself. In many cases like PyTorch, the FFT implementation is designed with the Cooley-Tukey FFT Algorithm for General Factorizations, which can handle sequences of arbitrary length, including those that are not powers of 2.

\textbf{Complex numbers.} In our paper, the Fourier kernel in \cref{eq:method-temporal-conv,eq:method-temporal-conv-fft} are defined as complex tensors. We implement the complex tensor with two tensors to represent the real and imaginary components of complex numbers, and use $\operatorname{torch.view\_as\_complex}$ to convert the two tensors into the complex one. The computation is roughly as efficient as typical real tensors.

\begin{table}[!t]
  \centering
  \caption{Summary of hyperparameters for \method on all datasets.}
    \begin{tabular}{lcccccccc}
    \toprule
          & \texttt{batch} & \texttt{lr}   & \texttt{wd} & \texttt{layer} & \texttt{hidden} & \texttt{timestep} & \texttt{time\_emb} & \texttt{num\_mode} \\
    \midrule
    N-body &   100    &  1e-4   & 1e-8  &  4     &    64   &    5   &   32    &  2 \\
    \texttt{Walk}/\texttt{Run} &  12     &   5e-4  & 1e-10  &  6   & 128  &   5    &   32    &   2    \\
    MD17  &   100    &   1e-4    &   1e-15    &  5     &  64  & 8  &  32     &  2 \\
    Protein &    8   &    5e-5   &    1e-4   &   4    &  128 &  4  &   32    &  2 \\
    \bottomrule
    \end{tabular}%
  \label{app:tab:param}%
\end{table}%

\subsection{Hyperparameters}
\label{app:subsec:hyperpara}

We provide detailed hyperparameters of our \method in Table~\ref{app:tab:param}. Specifically, \texttt{batch} is for batch size, \texttt{lr} for learning rate, \texttt{wd} for weight decay, \texttt{layer} for the number of layers, \texttt{hidden} for hidden dimension, \texttt{timestep} for the number of time steps, \texttt{time\_emb} for the dimension of time embedding, \texttt{num\_mode} for the number of modes (frequencies). We adopt Adam optimizer~\citep{kingma2014adam} and all models are trained towards convergence with an earlystopping of 50 epochs on the validation loss. In particular, for EGHNO, we require additional hyperparemeters: the number of pooling layers is 4, the number of decoding layers is 2, and the number of message passing layer is 4, the same as~\citet{han2022equivariant}. 
For the baselines, we strictly follow the setup in previous works and report the numbers from them in the~\stos setting. For~\stot, we keep the backbone EGNN the same number of layers and hidden dimension as ours for EGNN-R and EGNN-S, ensuring fairness.

\begin{figure*}[!t]
    \centering
    \includegraphics[width=0.95\linewidth]{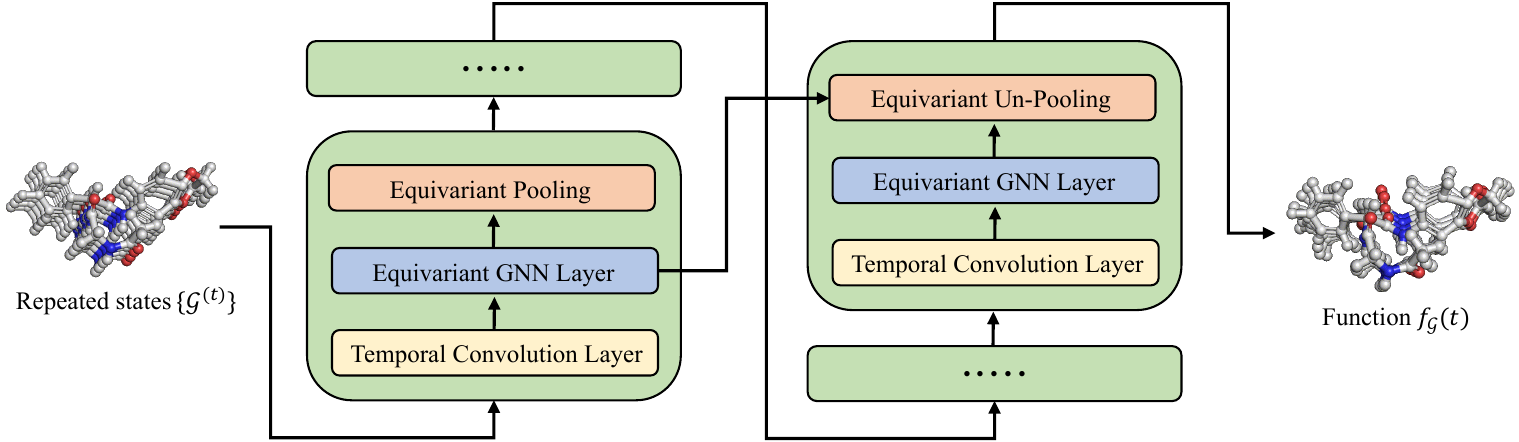} 
    \caption{Illustration of EGHNO. Here, we omit the input state repetition and time embedding conditioning process in \Cref{fig:framework} and concentrate on the model details.}
    \label{app:fig:hegno}
    \vspace{-5pt}
\end{figure*}

\subsection{Model Details of EGHNO}
\label{app:subsec:arch}

Here we provide an illustration of EGHNO in \Cref{app:fig:hegno} to show the details of its architecture. In addition to the equivariant GNN layers, we have the equivariant pooling and unpooling layers introduced in the equivariant graph hierarchical network (EGHN) \citep{han2022equivariant}. To realize EGHNO, we also stack the temporal convolution layers before each EGNN layer similar to \method (\Cref{fig:framework}). In particular, we follow EGHN and add jump connections between the equivariant unpooling layer and the corresponding low-level pooling layer, which helps to capture the low-level details of large geometric systems.

\section{More Experiment Results}
\label{app:more_exp}

\subsection{More ablation studies}

We provide more ablation studies in this section to justify the capacity of \method.

\textbf{Trajectory-to-trajectory.} 
Instead of inputting repetitions of $G(t)$, we can also input a sequence of previous structures to the model, which leads to a trajectory-to-trajectory (\ttot) model. We test this model on N-body simulation and report the result as follows, named EGNO-\ttot. The A-MSE on the predicted trajectory is reported in \cref{app:tab:ablation}.

\begin{table}[!h]
  \centering
    \caption{Average MSE in the N-body simulation. Results of EGNN-R, EGNN-S, and \method are directly taken from \Cref{tab:n_body} and the experiments share the same setup.}
    \begin{tabular}{lccccc}
    \toprule
      & EGNN-R & EGNN-S & \method & EGNO-\ttot & EGNO-TB \\
    \midrule
    A-MSE & 0.0215 & 0.0045 & 0.0022 & 0.0020 & 0.0039 \\
    \bottomrule
    \end{tabular}
    \label{app:tab:ablation}
    \end{table}

As shown in the result, our EGNO can handle the \ttot task and even achieve better results. However, we emphasize that, although this implementation achieved better results than the original EGNO, we do not want to highlight this because EGNO and all baseline models only operate on a single input point and EGNO-\ttot makes use of more input information. Therefore, we take this just as an ablation study and leave detailed investigations of the \ttot framework as a promising future direction.

\textbf{Temporal Bundling.}
\citet{brandstetter2022message} introduces the temporal bundling (TB) method, which tackles the state-to-trajectory task in an orthogonal direction, by factorizing future trajectory. We view TB as a mixture of our designed EGNN-R baseline and EGNO-IV variant in ablation study (\cref{tab:ablation}): EGNN-R learns to predict trajectory by rolling out and EGNO-IV learns to predict trajectory in parallel with a single model call, while TB proposed to factorize trajectory as multiple blocks and learns to predict blocks one by one. We tested such EGNN-TB on the N-body simulation benchmark by factorizing trajectory into blocks of size K=2. The A-MSE result is reported in \cref{app:tab:ablation}, which is the accumulated error for unrolling trajectory.

As shown in the result, EGNN-TB is indeed more expressive than EGNN-R and EGNN-S. However, our method still achieved much better results, which demonstrates the effectiveness of the proposed Fourier method to directly model the temporal correlation over the whole sequence.

However, we want to highlight two points here: 1) \citet{brandstetter2022message} is related to EGNO on the temporal modeling side. Another key contribution of EGNO is imposing geometric symmetry into the model, making EGNO fundamentally different from \citet{brandstetter2022message}. 2) Even for temporal modeling, the TB method actually is orthogonal to the Fourier-based method, where we can still factorize predictions as blocks while using Fourier operators to predict each block. Therefore, these two methods are not in conflict and we leave the novel combination as future works.

\subsection{Discretization Methods}
\label{app:discretization_methods}

\begin{table}[t!]
  \centering
\setlength{\tabcolsep}{5pt}
  \small
  \caption{MSE ($\times 10^{-2}$) on MD17 dataset.}
    \begin{tabular}{lcccccccc}
    \toprule
          & Aspirin & Benzene & Ethanol & Malonaldehyde & Naphthalene & Salicylic & Toluene & Uracil \\
          \midrule
          \method-U & 9.18 & 48.85 & 4.62 & 12.80 & 0.37 & 0.86 & 10.21 & 0.52 \\
\method-L & 9.32 & 57.39 & 4.62 & 12.80 & 0.35 & 0.84 & 10.25 & 0.57 \\
    \midrule
    \end{tabular}
    \label{app:tab-md17-UL}
    \vskip -0.2in
\end{table}
We present more results and discussions on the selection of discretization methods.
On Motion Capture, \method-U achieves F-MSE of 8.1 on \texttt{Walk} and 33.9 on \texttt{Run}, while \method-L obtains 11.2 on \texttt{Walk} and 31.6 on \texttt{Run}. The results on MD17 are depicted in Table~\ref{app:tab-md17-UL}. On protein data, \method-U has an F-MSE of 2.231, \method-L has an F-MSE of 2.231, EGHNO-U has an F-MSE of 1.801, and EGHNO-L has an F-MSE of 1.938. On Motion Capture, \method-U performs better than \method-L on \texttt{Walk}, while it becomes slightly worse than \method-L for the other, \ie, \texttt{Run}. We speculate that since running incurs more significant vibrations of the positions in the movements, \method-L, by focusing more on the last short period of the trajectory, permits better generalization than \method-U, which instead operates on a uniform sampling along the entire trajectory, whereas on the more stable dataset, walking, the prevalence of \method-U is observed. \method-U and \method-L yield similar simulation error on MD17 and protein data, since the molecular trajectories are relatively stable with small vibrations around the metastable state, making the temporal sampling less sensitive.

\subsection{Qualitative Visualizations}
\label{app:subsec:vis}

\textbf{N-body and Motion Capture.} We provide visualizations of the dynamics predicted by \method in this section. The results of particle simulations, Mocap-\texttt{Walk}, and Mocap-\texttt{Run} are shown in \Cref{app:fig:particle,app:fig:motion-walk,app:fig:motion-run} respectively. As shown in these figures, our \method can produce not only accurate final snapshot predictions but also reasonable temporal interpolations by explicitly modeling temporal correlation.

\begin{figure}[!thbp]
    \centering
    \includegraphics[width=0.85\linewidth]{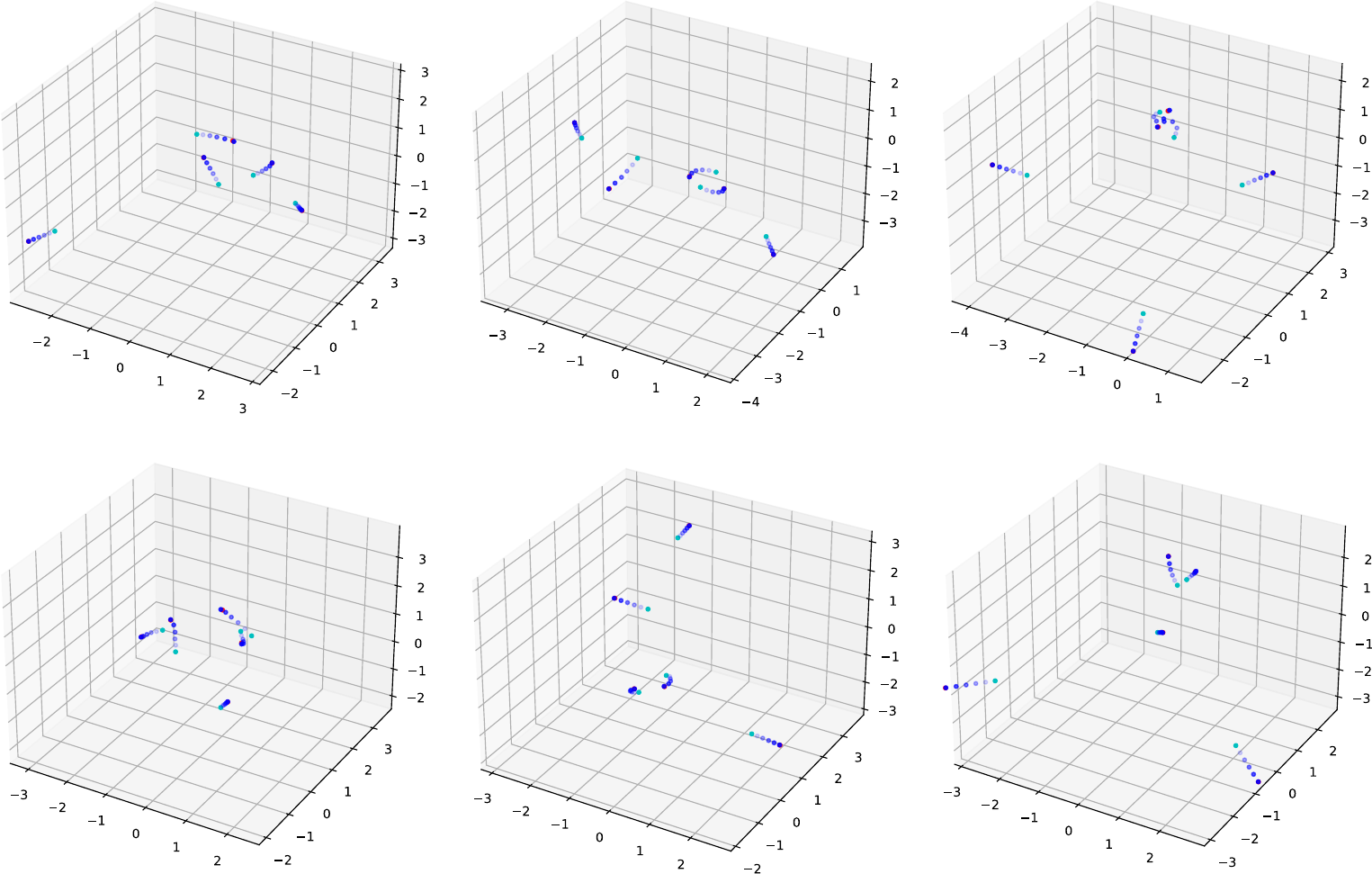}
    \caption{Visualization of the trajectory generated by \method with uniform discretization on the N-body simulation dataset. The input is in \textcolor{cyan}{cyan}, the ground truth final snapshot is in \textcolor{red}{red}, and the predicted trajectory is in \textcolor{blue}{blue}. The opacity changes as time elapses.}
    \label{app:fig:particle}
\end{figure}

\begin{figure}[!thbp]
    \centering
    \includegraphics[width=0.90\linewidth]{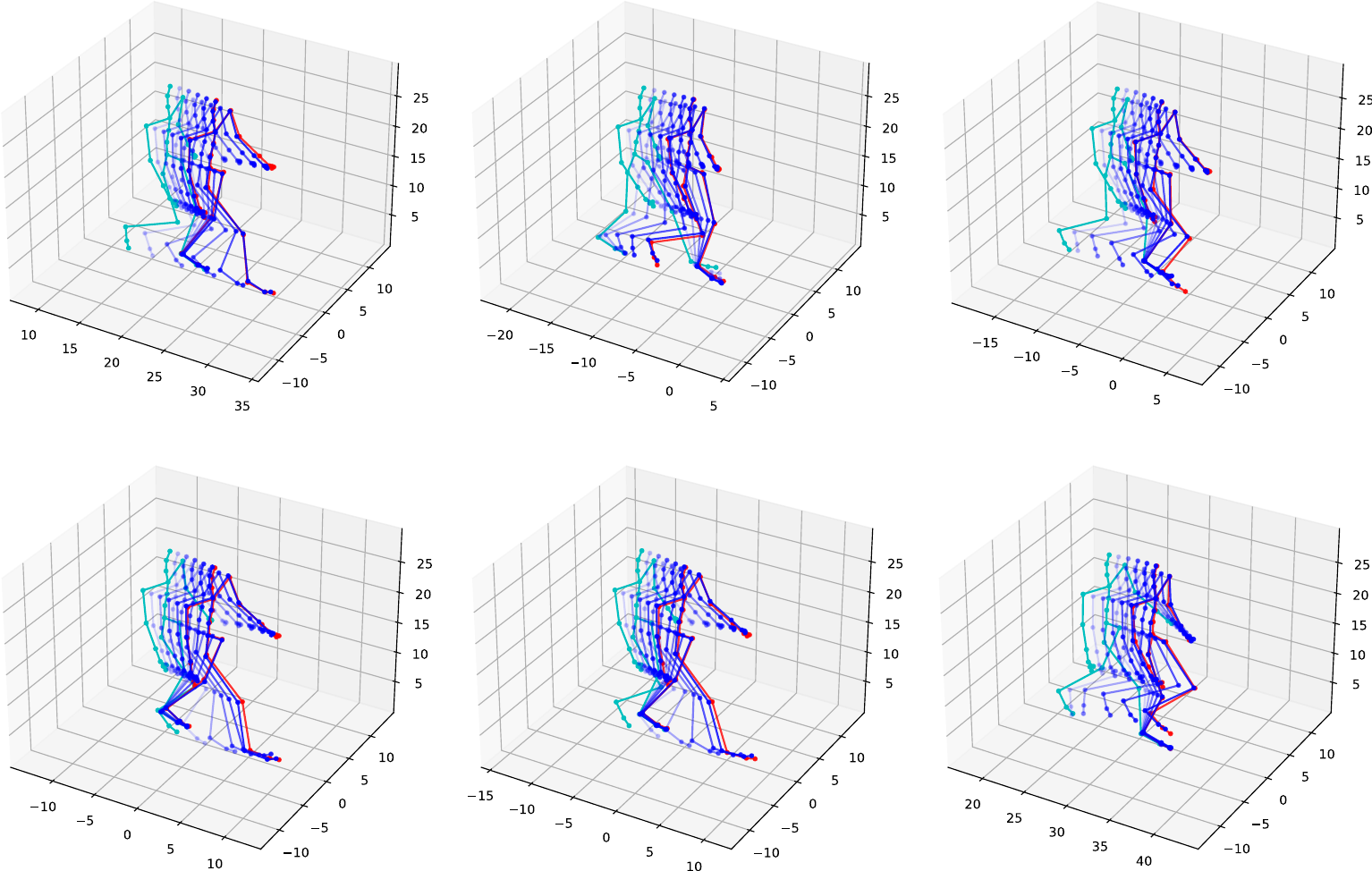}
    \caption{Visualization of the trajectory generated by \method with uniform discretization on Motion Capture \texttt{Walk}. The input is in \textcolor{cyan}{cyan}, the ground truth final snapshot is in \textcolor{red}{red}, and the predicted trajectory is in \textcolor{blue}{blue}. The opacity changes as time elapses.}
    \label{app:fig:motion-walk}
\end{figure}

\begin{figure}[!thbp]
    \centering
    \includegraphics[width=0.90\linewidth]{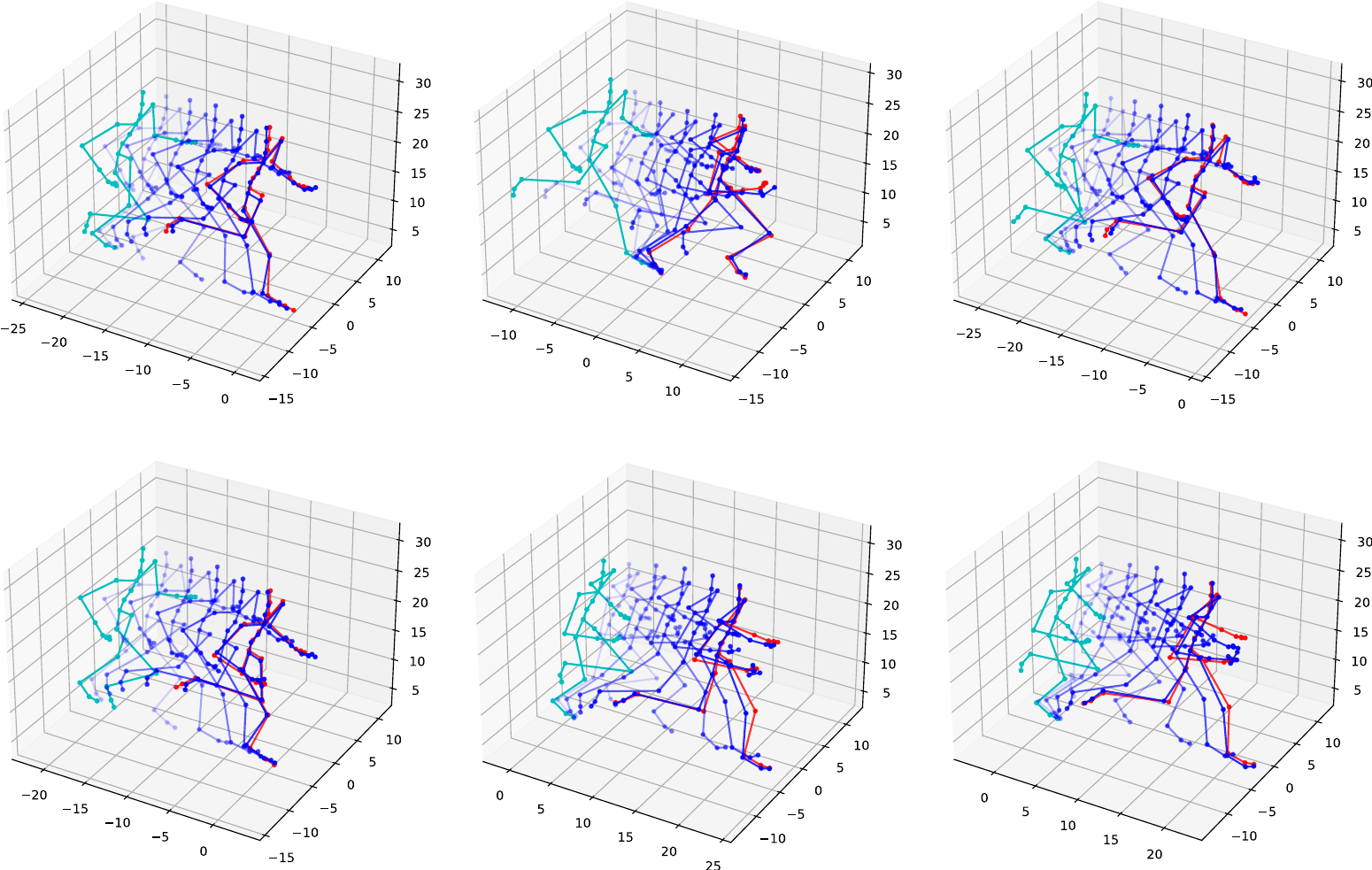}
    \caption{Visualization of the trajectory generated by \method with uniform discretization on Motion Capture \texttt{Run}. The input is in \textcolor{cyan}{cyan}, the ground truth final snapshot is in \textcolor{red}{red}, and the predicted trajectory is in \textcolor{blue}{blue}. The opacity changes as time elapses.}
    \label{app:fig:motion-run}
\end{figure}

\textbf{Protein.} We provide visualizations of the protein molecular dynamics predicted by \method in~\Cref{app:fig:protein}. As shown in the figure, our \method can produce accurate final snapshot predictions and also tracks the folding dynamics of the protein. Interesting observations can be found at the bottom regions, \eg, the alpha helix structures, where \method gives not only close-fitting predictions but also reasonable temporal interpolations.

\begin{figure}[!thbp]
    \centering
    \includegraphics[width=0.45\linewidth]{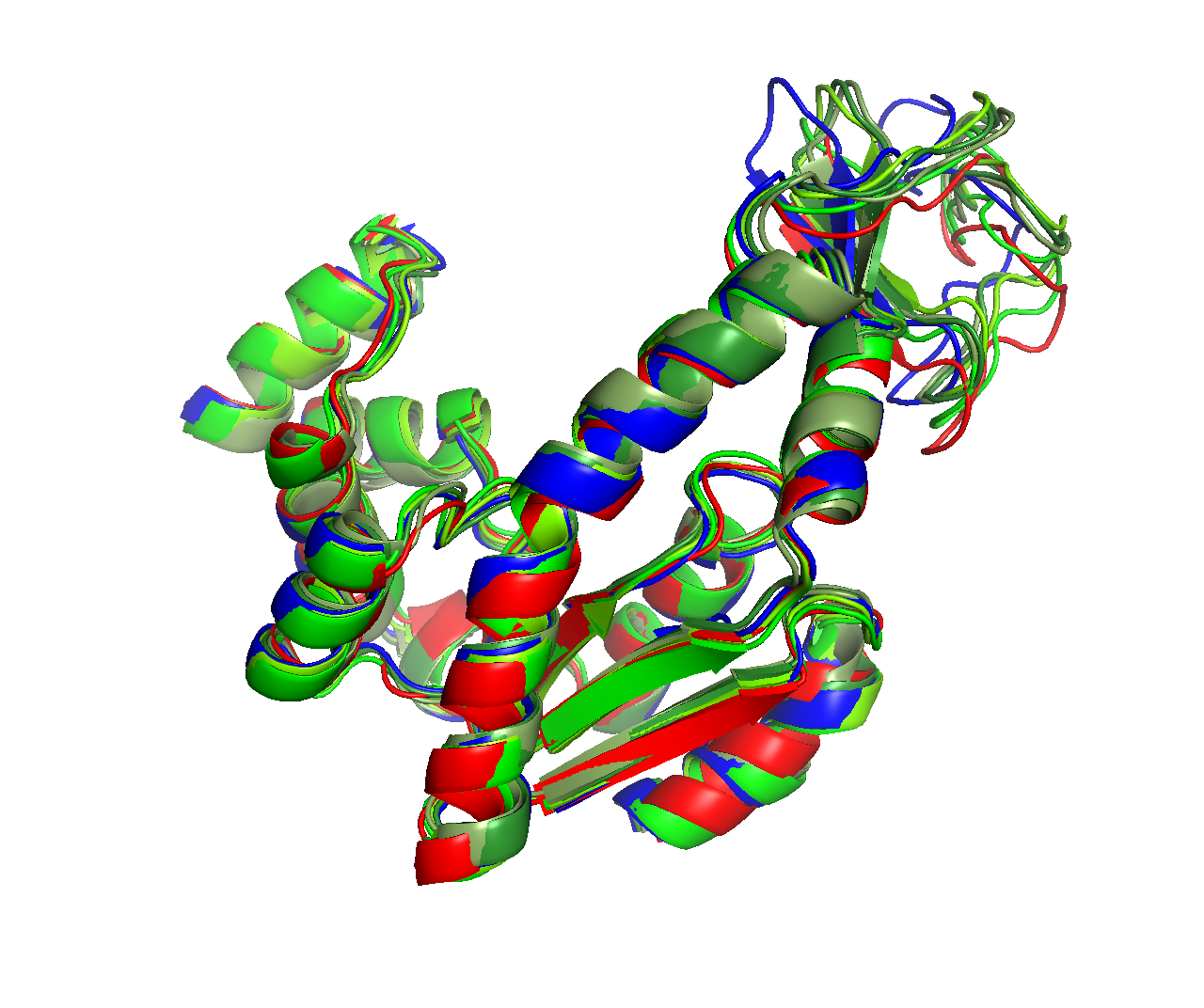}
    \includegraphics[width=0.45\linewidth]{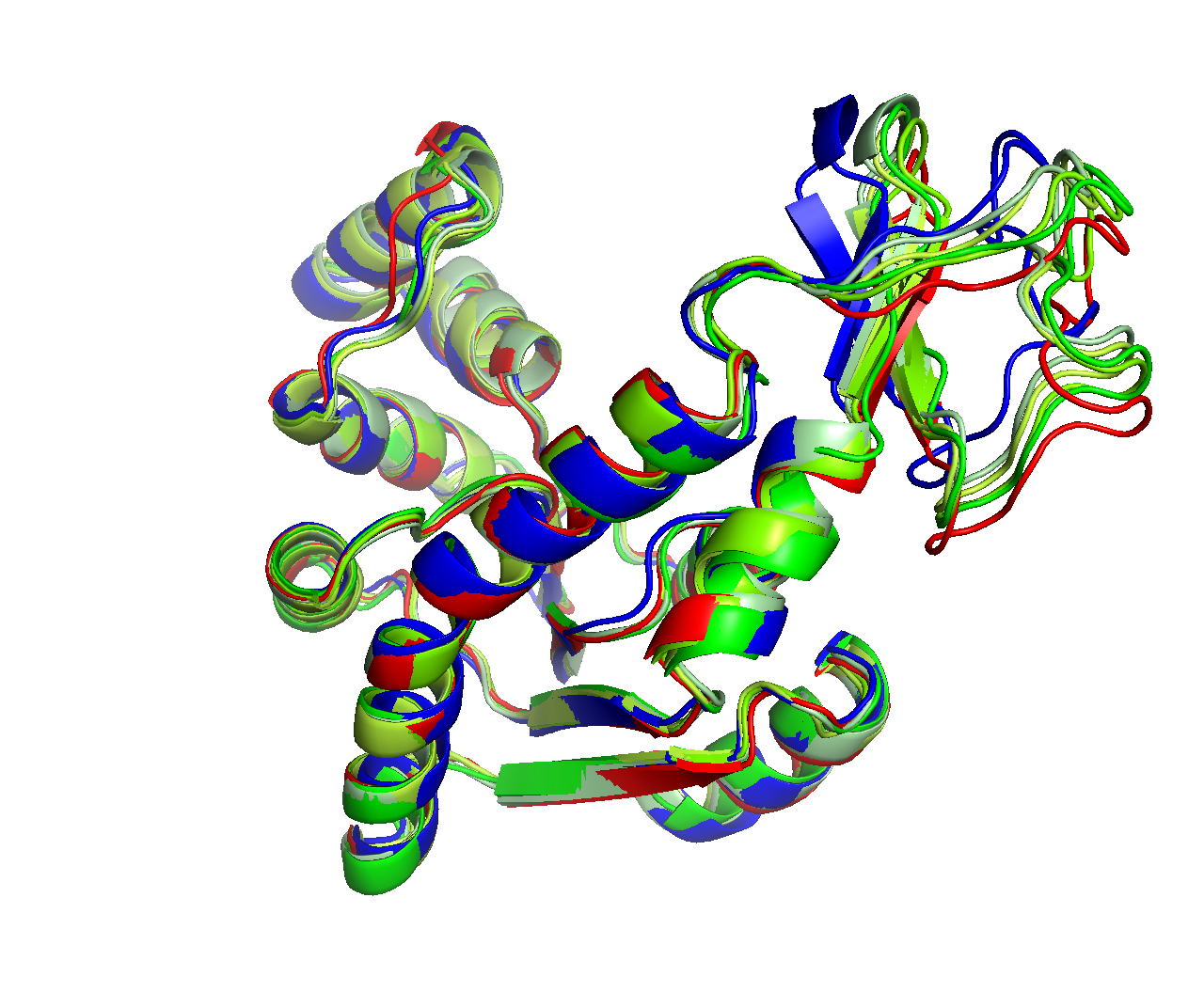}
    \caption{Visualization of the trajectory generated by \method with uniform discretization on Protein. The input is in {\color{blue}blue}, the ground truth final snapshot is in {\color{red}red}, and the predicted trajectory is in {\color{green}green}. The darkness of green changes as time elapses.}
    \label{app:fig:protein}
\end{figure}

\end{document}